\DeclarePairedDelimiter{\ceil}{\lceil}{\rceil}
\DeclarePairedDelimiter{\floor}{\lfloor}{\rfloor}
\DeclarePairedDelimiter{\norm}{\lVert}{\rVert}
\newcommand{\mC}[0]{{a}} 
\newcommand{\eC}[0]{{ b}} 
\newcommand{\expBaseC}[0]{{\beta}} 
\newcommand{\modulationC}[0]{{\theta}}
\newcommand{\logTe}[0]{{\gamma}} 
\newcommand{\entOfSetLazy}[0]{\mEnt(\epsilon; \mathcal{C}(\mC{}, \eC{}), \rho) }
\newtheorem{lemma}{Lemma}[section]
\newtheorem{theorem}{Theorem}[section]
\newtheorem{remark}{Remark}[section]
\newtheorem{definition}{Definition}[section]
\title{Metric Entropy Limits on Recurrent Neural Network Learning of Linear Dynamical Systems}
\author{Clemens Hutter, Recep G\"ul, and Helmut B\"olcskei}
\begin{document}
\newcommand{\normIII}[1]{{\left\vert\kern-0.25ex\left\vert\kern-0.25ex\left\vert #1 
    \right\vert\kern-0.25ex\right\vert\kern-0.25ex\right\vert}}

\renewcommand{\vec}[1]{#1}

\newcommand{\h}[1]{\vec{h}[{#1}]}
\newcommand{\ind}[1]{\mathbbm{1}_{\{#1\}}}
\newcommand{\eye}[1]{\mathbbm{I}_{#1}}
\newcommand{\Z}[1]{\mathcal{Z}\left\{ #1\right\}}
\newcommand{\iZ}[1]{\mathcal{Z}^{-1}\left\{ #1\right\}}
\newcommand{\R}[0]{\mathbb{R}}
\newcommand{\WW}[0]{W_{\mathbf{a}, \mathbf{b}}}
\newcommand{\Imat}[1]{\mathbb{I}_{#1}}
\newcommand{\Omat}[0]{\mathbb{O}}
\newcommand{\Ovec}[1]{\vec{0}_{#1}}
\newcommand{\Ivec}[1]{\vec{1}_{#1}} 

\newcommand{\quantF}[1]{S_{\delta} (#1)}
\newcommand{\quantSet}[0]{\mathbb{S}_{\delta}}

\newcommand{\mnorm}[1]{\norm{#1}_{max}}
\newcommand*{\vvbar}{\rule[-1ex]{0.5pt}{2.5ex}}
\newcommand*{\hhbar}{\rule[.5ex]{2.5ex}{0.5pt}}
\newcommand{\logqd}[2]{log_2(5 + \frac{#1}{#2})}

\newcommand{\N}[0]{\mathbb{N}}
\newcommand{\C}[0]{\mathbb{C}}

\newcommand{\sysOp}[0]{\mathcal{L}}
\newcommand{\seqSpace}[0]{\ell_\infty} 
\newcommand{\seqSpaceC}[0]{\ell_\infty} %

\newcommand{\rnnOp}[1]{\mathcal{R}_{#1}}
\newcommand{\quantRnnOp}[1]{\widetilde{\rnnOp{#1}}}
\newcommand{\quantRnnImp}[0]{\widetilde{k}}

\newcommand{\lengthK}[0]{T}%

\newcommand{\Hardy}[1]{{\mathcal{H}^{#1}}}

\newcommand{\HardyTwo}[1]{ \sqrt{ \sup_{r<1} \frac{1}{2\pi} \int_{0}^{2\pi} #1 d\theta }}

\newcommand{\mEnt}{\mathcal{E}}
\newcommand{\entSet}{\mathcal{C}}
\newcommand{\sysSubset}{\entSet_{C, a}}
\newcommand{\opMetric}{\rho_{*}}
\newcommand{\hardyRho}{\rho_{H}}

\newcommand{\unitImpulse}{\delta}
\newcommand{\impResp}{k}
\newcommand{\transF}{K}

\newcommand{\opNorm}[1]{\normIII{#1}_{2}}
\newcommand{\hNorm}[1]{\norm{#1}_{\Hardy{\infty}}}
\newcommand{\hhNorm}[1]{\norm{#1}_{\Hardy{2}}}

\newcommand{\freq}[0]{\theta}
\newcommand{\spread}[0]{S_{\mathcal{L}}}
\newcommand{\approxSpread}[0]{\widetilde{S}_{\mathcal{L}}}
\newcommand{\relu}[0]{\rho}

\newcommand{\fconti}[0]{\gamma}

\newcommand{\xHistory}[0]{g}
\newcommand{\periodState}[0]{e}
\newcommand{\periodStateB}[0]{e'}

\newcommand{\periodMatrix}[0]{A_e} 
\newcommand{\periodBias}[0]{b_e} 

\newcommand{\xtPeriod}[2]{\widetilde{x}[#1, #2]} 
\newcommand{\periodizedK}[1]{\widetilde{k}[#1]} 
\newcommand{\kNormalization}[0]{\overline{k}}

\newcommand{\Ar}[0]{A_r}
\newcommand{\Ao}[0]{A_o}
\newcommand{\Ah}[0]{A_h}
\newcommand{\AF}[0]{A_F}

\newcommand{\stateDim}[0]{m} 
\newcommand{\hidLayerDim}[0]{n} 
\newcommand{\repDim}[0]{{R}} 
\newcommand{\maxTime}[0]{D}

\newcommand{\rnnOut}[0]{y}

\newcommand{\xPeriod}[0]{\widetilde{x}}

\newcommand{\hHist}[0]{\dot{h}}
\newcommand{\hPS}[0]{\ddot{h}}
\newcommand{\gHist}[0]{\dot{g}}

\newcommand{\modulatedX}[0]{\widehat{x}}

\newcommand{\hidElm}[0]{\mathring{h}}
\newcommand{\stateDimElm}[0]{\mathring{m}}
\newcommand{\wElm}[0]{\mathring{W}}
\newcommand{\uElm}[0]{\mathring{U}}
\newcommand{\bElm}[0]{\mathring{b}}

\newcommand{\f}[0]{f} 
\newcommand{\F}[0]{F}

\newcommand{\hide}[1]{}
\renewcommand{\show}[1]{#1}

\newcommand{\eps}[0]{\epsilon} 

\begin{abstract}
One of the most influential results in neural network theory is the universal approximation theorem \cite{hornik89, funahashi89, cybenko89} which states that continuous functions can be approximated to within arbitrary accuracy by single-hidden-layer feedforward neural networks. The purpose of this paper is to establish a result in this spirit for the approximation of general discrete-time linear dynamical systems---including time-varying systems---by recurrent neural networks (RNNs).  For the subclass of linear time-invariant (LTI) systems, we devise a quantitative version of this statement. Specifically, measuring the complexity of the considered class of LTI systems through metric entropy according to \cite{zamesDisc}, we show that RNNs can optimally learn---or identify in system-theory parlance---stable LTI systems. 
For LTI systems whose input-output relation is characterized through a difference equation, this means that RNNs can learn the
difference equation from input-output traces in a metric-entropy optimal manner.
\end{abstract}

\address{Chair for Mathematical Information Science, ETH Zurich\\
Sternwartstrasse 7, 8092 Zurich, Switzerland}
\maketitle

\section{Introduction}

During the past decade recurrent neural networks (RNNs) have revolutionized numerous machine learning applications, such as handwritten text recognition \cite{graves2009_handwriting}, speech recognition \cite{Graves2006_speach}, language translation \cite{sutskever2014seq2seq}, and modeling of complex game dynamics \cite{muZero}. 
Abstractly speaking, an RNN realizes a dynamical system mapping an input sequence to an output sequence through---in each time step---application of
a single-hidden-layer neural network to update 
a hidden state vector and compute the output signal sample.
It is hence natural to ask which classes of dynamical systems can be realized or approximated by RNNs.
This question is inspired by the well-known universal approximation theorem for feedforward neural networks \cite{hornik89, funahashi89, cybenko89}, which states that every continuous function on a compact interval can be approximated to within arbitrarily small error by a single-hidden-layer neural network, provided that the number of neurons is allowed to go to infinity as the approximation error approaches zero. 

The first central result in this paper establishes that RNNs universally exactly realize the class of linear dynamical systems, including time-varying systems. This universal linear dynamical system realization theorem builds on a strong representation theorem for general linear operators stemming from harmonic analysis \cite[Theorem~14.3.5]{Grochenig2001},\cite{Fefferman83,matz13}, which states that every ``reasonable'' linear operator can be written as a weighted superposition of time-frequency shift operators. Note that we conspicuously use the term ``realization theorem'' instead of ``approximation theorem'' as RNNs with real-valued weights will, indeed, be shown to exactly realize general linear dynamical systems.

The second central theme of this paper revolves around making the universal system realization theorem quantitative. Specifically, we consider classes of linear dynamical systems, quantify their complexity through metric entropy according to \cite{zamesConti, zamesDisc}, and ask how the number of bits needed to uniquely specify RNNs approximating systems in this class to within a prescribed error relates to the class's metric entropy.
This part of the theory we develop is restricted to linear time-invariant (LTI) systems for conceptual reasons. The main result we obtain states that RNNs with suitably quantized weights provide optimal coverings---in the sense of metric entropy---for the class of LTI systems with exponentially decaying impulse response. 
In control theory parlance, this says that RNNs can be trained to identify LTI systems with exponentially decaying impulse response in a metric-entropy optimal fashion. 
We also show that, equivalently, this means that certain classes of linear difference equations with constant coefficients can be learned optimally by RNNs.
The overall philosophy of the framework we propose is inspired by
the recently established Kolmogorov-Donoho rate-distortion theory \cite{Kolmogorov1959,Donoho2001sparse,Donoho96bitLevel,Donoho1998compression} for feedforward neural networks \cite{deep-approx-18,deepAT2019} which shows that deep neural networks provide optimal coverings for a wide range of function classes, such as unit balls in Besov spaces and in modulation spaces. 

We hasten to add that, throughout the paper, we are exclusively concerned with the fundamental representation capabilities of RNNs and do not consider the issue of learning algorithms, a topic that has been investigated in the context of LTI system identification in \cite{hardt2018, li2020courseOfMem}.

Previous work on the approximation of linear dynamical systems through neural networks deals with (linear and nonlinear) time-invariant systems and assumes that the system is specified in terms of a state space representation, concretely by a (time-invariant) next-state function 
which is approximated by a single-hidden-layer neural network, the existence of which is guaranteed by the classical universal approximation theorem
\cite{hornik89, funahashi89, cybenko89}. This approach leads, however, to the accumulation of errors over time so that most results along these lines are restricted to finite time horizons \cite{schaefer07, Sontag92neuralnets, funahashi93}. A notable exception in this regard is \cite{Matthews93}, which avoids error build-up by imposing an ``absolute summability'' condition on the system's possible state trajectories.
Nonetheless, all these results require that the system be characterized by a state space representation, the existence of which is not guaranteed for a given linear dynamical system \cite[Theorem 2.3.3]{Heij2021}. In the present paper, we do not impose such an existence assumption. In addition, our theory comprises time-varying systems and pertains to unbounded time horizons, but, as already mentioned, is restricted to linear systems.

As for our second central theme, namely metric-entropy-optimal RNN learning of LTI systems, to the best of our knowledge, such an approach has not been pursued before in the literature. Related previous work reported in 
\cite{li2020courseOfMem} quantifies the number of real-valued RNN weights required for a desired approximation quality, but does not attempt to
specify the approximating RNNs through bitstrings of finite length.

We furthermore want to highlight work on a non-recurrent neural network architecture, termed ``Deep operator network'' \cite{chenchen1995, Lu2021, lanthaler2021error}, which enables the universal approximation of nonlinear operators.
Finally, RNNs have also been investigated for the approximation of algorithms, with a prominent result \cite{sonntag95} proving that
RNNs with binary input and output sequences and rational weights can simulate any Turing machine.

\paragraph{Outline of the paper} In the remainder of this section, we provide preparatory material on RNNs and on harmonic analysis of general linear dynamical systems.
In Section \ref{sec:univ_ltv}, we develop the first central result of the paper, namely a universal realization theorem for discrete-time linear dynamical systems. 
In Section \ref{sec:metric_entropy_lti}, we introduce the concept of metric entropy of classes of LTI systems, based on which, in Section \ref{sec:quant_rnn}, we 
state the second central result establishing that RNNs realize LTI systems of exponentially decaying impulse response in a metric-entropy-optimal fashion. Appendices \ref{app:elmanrnn} and \ref{app:zproperties} summarize technical results needed in the main body of the paper.

\paragraph{Notation} 
Vectors are indexed starting with $\ell=1$. $\ind{\cdot}$ denotes the truth function which takes on the value $1$ if the statement inside $\{\cdot\}$ is true and equals $0$ otherwise. Sequences $x[t]\in\R$ are indexed by $t\in\mathbb{Z}$.
The $N\times N$ identity matrix is $\Imat{N}$ and $\Omat_{N}$ stands for the $N \times N$ all zeros matrix.
$\Ivec{N}$ and $\Ovec{N}$ denote the $N$-dimensional column vector with all entries equal to $1$ and $0$, respectively. $\log(\cdot)$ refers to the natural logarithm.
We write $f(\epsilon)\thicksim g(\epsilon)$ to mean $\lim_{\epsilon \rightarrow 0} \frac{f(\epsilon)}{g(\epsilon)} = 1$. Throughout the paper, constants are understood to be in $\R$ unless explicitly stated otherwise.

\subsection{Recurrent Neural Networks}

A recurrent neural network (RNN) is described by a hidden state vector sequence $h[t]$, the input signal $x[t]$, and the output signal $\rnnOut[t]$. In each time instant $t$, a single-hidden-layer neural network is applied to the concatenation of the input sample $x[t]$ and the previous state vector $h[t-1]$ to produce the current output sample $\rnnOut[t]$ and the new state vector $h[t]$. The formal definition is as follows.
\begin{definition}[Recurrent neural network]\label{def:rnn}
For $\hidLayerDim\in\N$ and hidden state dimension $\stateDim \in \N$, let $\Phi: \R^{\stateDim+1} \rightarrow \R^{\stateDim+1}$ be a feedforward neural network given by 
  \begin{equation}\label{eq:weights}
    \Phi(\vec{x}) = A_2\,\relu (A_1\vec{x} + b_1) + b_2, \quad x \in \R^{\stateDim+1},
  \end{equation}
 with weight matrices $A_1\in\R^{\hidLayerDim\times (\stateDim+1)}$, $A_2\in\R^{(\stateDim+1) \times \hidLayerDim}$, bias vectors $b_1\in \R^{\hidLayerDim}$, $b_2 \in \R^{\stateDim+1}$, and the ReLU activation function $\relu(x) = \max\{x,0\}, \, x\in\R$, applied element-wise.
 The recurrent neural network associated with $\Phi$ is the operator $\rnnOp{\Phi}: \seqSpace \rightarrow \seqSpaceC$ mapping
 input sequences $(x[t])_{t \geq 0}$ in $\R$ to output sequences $(\rnnOut[t])_{t\geq 0}$ in $\R$
  according to
  \begin{align}
  	\begin{pmatrix}
  		\rnnOut[t] \\
  		\h{t}
  		\end{pmatrix} = \Phi \left(
  		\begin{pmatrix}x[t]  \\ \h{t-1}  \end{pmatrix} \right)
  	, \; \forall t \geq 0,
  		\label{eq:rnn_seq}
  \end{align}
  where $\h{t}\in \R^\stateDim$ is the hidden state sequence with initial state $\h{-1} = 0_m$.
\end{definition}

\begin{remark}
    Classical RNN definitions 
    are often referred to as Elman networks \cite{elman90}, \cite[p274]{Goodfellow2016}. We show in Appendix \ref{app:elmanrnn} that our RNN definition does not afford increased generality over Elman networks as every RNN according to Definition \ref{def:rnn} can be converted into an Elman RNN. We decided, however, to work with the seemingly more general Definition \ref{def:rnn} for expositional simplicity.
\end{remark}
        We now introduce a decomposition of the weight matrix $A_2$ which will simplify the description of RNN constructions later in the paper. Specifically, we represent $A_2$ according to
        \begin{equation}\label{eq:a2_split}
            A_2 = 
            \begin{pNiceArray}{c}[margin]
                A_o A_r \\
                A_h
            \end{pNiceArray} \in \R^{(\stateDim+1) \times \hidLayerDim},
        \end{equation}
        where $\Ah \in \R^{\stateDim \times \hidLayerDim}$ is responsible for mapping to the next hidden state 
        and, for some $\repDim \in \N$, $\Ar \in \R^{\repDim \times \hidLayerDim}$ maps to an $\repDim$-dimensional virtual 
        representation which, in turn, is linearly combined through the weights $\Ao \in \R^{1 \times \repDim}$ to deliver the output $y[\cdot]$. 
        Consorting with this decomposition of $A_2$
        and noting that $b_2=0_{\stateDim+1}$ in all our concrete RNN constructions, 
        the hidden state sequence evolution can be written as
        \begin{align}
            \vec{h}[t] &= \Ah \vec{g}[t],  \label{eq:hidden_evolution_split}  \quad \forall t \geq 0,
        \end{align}
        where
        \begin{equation}\label{eq:intermediate_activation}
            \vec{g}[t] = \relu\left(A_1 \begin{pmatrix} x[t] \\ \vec{h}[t-1] \end{pmatrix} + b_1\right), \quad \forall t \geq 0,
        \end{equation}
        and the initialization is $\vec{h}[-1]=0_{\stateDim}$ as before.
        The output sequence is accordingly obtained as
        \begin{align}
            \vec{r}[t] &= \Ar \vec{g}[t], \label{eq:hidden_rep}\\
            \rnnOut[t] &= \Ao \vec{r}[t], \label{eq:simple_output}
        \end{align}
        where $\vec{r}[t]\in  \R^{\repDim}$ denotes the virtual representation sequence. We note that
        this virtual representation never actually manifests itself, it is introduced solely to simplify the specific RNN constructions later in the paper. Finally, we remark that, throughout, whenever we speak of ``weights'' of the RNN, this shall refer to nonzero entries both in the weight matrices $A_1,A_2$ and the bias vectors $b_1,b_2$.
\subsection{Harmonic Analysis of Linear Dynamical Systems}\label{sec:linear_system_intro}

We consider discrete-time causal linear systems $\sysOp$ mapping input sequences $x[\cdot] \in \seqSpace$ to output sequences $y[\cdot] \in \seqSpace$, and we use the convention 
\begin{equation}\label{eq:assum_x_right}
    x[t] = 0, \qquad \forall t < 0,
\end{equation}
which, by causality and linearity, implies 
$y[t]=0, \, \forall t<0$.

A fundamental result from harmonic analysis \cite[Theorem~14.3.5]{Grochenig2001}, in its incarnation for discrete-time systems, states 
that a wide class of linear operators, i.e., linear dynamical systems, can be represented as a weighted superposition of time-frequency shift operators according to
\begin{equation}\label{eq:delay_doppler}
    y[t] =  \sum_{\tau= 0}^{\infty} \int_0^1 \spread(\tau, \nu) x[t-\tau] e^{2\pi i \nu t} d\nu,
\end{equation}
with the weights given by the delay-Doppler spreading function $\spread(\tau, \nu)$. 
Alternatively, 
(\ref{eq:delay_doppler}) can be expressed in terms of the operator kernel, a.k.a. time-varying impulse response, $k[t, \tau]$, as
\begin{equation}\label{eq:ltv_imp_resp}
    y[t] = \sum_{\tau= 0}^{\infty} k[t, \tau] x[t-\tau],
\end{equation}
where $k[t, \tau]$ is related to the spreading function through an inverse Fourier transform according to
\begin{equation}
    k[t, \tau] =  \int_0^1 \spread(\tau, \nu) e^{2\pi i \nu t} d\nu.
\end{equation}
For a mathematically accessible introduction to this theory, we refer the interested reader to \cite{matz13}.

Throughout the paper, in an attempt to minimize the level of technical sophistication and expositional complexity, we will work with a fully discrete and finite-dimensional version of (\ref{eq:delay_doppler}) given by
\begin{equation}
    y[t] = \sum_{\tau=0}^{\maxTime-1} \sum_{f=0}^{F-1}\approxSpread(\tau, f) x[t-\tau] e^{2\pi i \frac{f}{F} t}, \label{eq:discret_LTV_intro}
\end{equation}
with $\maxTime,\, F \in \N$. In the continuous-time case the size of the spreading function support area
plays a critical role as there is a threshold beyond which the system becomes unidentifiable \cite{matz13}. While we will
not dwell on this matter, we simply note that in the setup considered here, the spread is given by $D \cdot F$, i.e., the total
number of time-frequency shifts the system induces.

\section{Universal Realization of Linear Dynamical Systems}\label{sec:univ_ltv}

In this section, we develop our first central result, a universal realization theorem for linear dynamical systems.
This will be effected by building on the spreading decomposition (\ref{eq:discret_LTV_intro}). Specifically, we first devise---in Lemma \ref{lem:rnn_lin_combi_of_history}---RNNs that realize time shifts, then---in Lemma \ref{lem:rnn_modulation}---RNNs implementing frequency shifts, and finally these building blocks are put together to obtain
an RNN that realizes a weighted superposition of time-frequency shifts
according to (\ref{eq:discret_LTV_intro}).

We start with RNNs that realize time shifts. For later reference, we actually construct more general RNNs that implement convolutions, i.e., weighted superpositions of time shifts.
\begin{lemma}[RNNs can realize time shifts and convolutions]\label{lem:rnn_lin_combi_of_history}
    Let $L \in \N$ and $\vec{k} \in \R^{L}$. There exists an RNN with input-output relation
    \begin{equation}\label{leq:output}
        \rnnOut[t] = \sum_{\ell=1}^{L} k_{\ell}\, x[t-(\ell-1)], \quad \forall t \geq 0,
    \end{equation}
    hidden state dimension $L-1$, and hidden state sequence satisfying
    \begin{equation}\label{leq:history}
            h_\ell[t] = x[t-(\ell-1)], \quad \forall t\geq 0, \ell \in \{1, \dots, L-1\}.
    \end{equation}
    
    \begin{proof}
     The proof is constructive in the sense of specifying the RNN as a function of the impulse response vector $k\,\in\,\R^{L}$. We start by choosing weight matrices and bias vectors such that (\ref{leq:history}) holds. 
     The basic idea is to design the network such that the past values of $x[\cdot]$ in the hidden state vector $\vec{h}$ are shifted downward by one position in each time step $t$, dropping the oldest value at the bottom of the vector and inserting the current value $x[t]$ at the top.
      To move the values through the non-linear activation function without modifying them, we employ the identity 
      \begin{equation}\label{teq:relu_identiy}
        x=\relu(x) - \relu(-x).
      \end{equation}
      We set 
      \begin{equation}\label{teq:timeshift_A1}
        A_1 = \begin{pNiceArray}{c}
        \Imat{L} \\
        -\Imat{L}
        \end{pNiceArray} \in \R^{2L \times L},
    \end{equation} 
    \begin{equation}\label{teq:timeshift_Ah}
        A_h = \begin{pNiceArray}{cccc}
        \Imat{L-1} & \Ovec{L-1} & 
        -\Imat{L-1} & \Ovec{L-1}
        \end{pNiceArray} \in \R^{(L-1) \times 2L},
    \end{equation} 
    $b_1=\Ovec{2L}$, and $b_2=\Ovec{L}$. 
    
    With these choices, the proof of (\ref{leq:history}) is now effected by induction over $t$. 
    First, we note that for $h[t]$ in (\ref{leq:history}) to constitute a valid hidden state sequence according to Definition \ref{def:rnn}, the initial state needs to satisfy $h[-1]=\Ovec{L-1}$. This follows directly from $x[t] = 0, \, \forall t<0,$ which is by assumption (\ref{eq:assum_x_right}), and also constitutes the base case of the induction argument.
    To establish the induction step, we assume that (\ref{leq:history}) holds for $t-1$ for some $t\geq0$, i.e.,
    \[ 
    h_\ell[t-1] = x[(t-1)-(\ell-1)] = x[t-\ell], 
    \]
    and show that---thanks to the choices for $A_1,A_h,b_1$, and $b_2$ made above---this implies validity of (\ref{leq:history}) for $t$.
    Using (\ref{teq:timeshift_A1}) and $b_1=0_{2L}$ in (\ref{eq:intermediate_activation}), one obtains
    \begin{align}
        \label{teq:timeshift_g}
        &\vec{g}[t] = \relu\left(A_1 \begin{pmatrix}x[t] \\ \vec{h}[t-1] \end{pmatrix} \right) 
        = \relu\left(A_1 \begin{pmatrix}x[t] \\ x[t-1] \\ \vdots \vspace*{1mm} \\ x[t-(L-1)] \end{pmatrix} \right) \\
        &= \begin{pmatrix} \relu(x[t]) & \hdots & \relu(x[t-(L-1)]) & \relu(-x[t]) & \hdots & \relu(-x[t-(L-1)])  \end{pmatrix}^T. \nonumber
    \end{align}
    Then, we evaluate (\ref{eq:hidden_evolution_split}) with $\Ah$ from (\ref{teq:timeshift_Ah}) and use (\ref{teq:relu_identiy}) to get
    \begin{align*}
        \vec{h}[t] = \Ah \vec{g}[t] = \begin{pmatrix} 
        \relu(x[t]) - \relu(-x[t]) \\
        \vdots \vspace*{1mm}\\
        \relu(x[t-(L-2)]) - \relu(-x[t-(L-2)])
        \end{pmatrix} = \begin{pmatrix} x[t] \\ \vdots \vspace*{1mm} \\ x[t-(L-2)] \end{pmatrix},
    \end{align*}
    or equivalently $h_\ell[t] = x[t-(\ell-1)], \forall \ell \in \{1,\dots,L-1\}$, 
    which establishes the induction step.

    It remains to realize the input-output relation (\ref{leq:output}). To this end, we set
    \begin{equation}
        \Ar  = \begin{pNiceArray}{cc}
        \Imat{L}  & 
        -\Imat{L}
        \end{pNiceArray} \in \R^{L \times 2L}, \qquad \Ao  = \vec{k}^T \in \R^ {1\times L},
    \end{equation}
    and use (\ref{teq:timeshift_g}), (\ref{eq:hidden_rep}), (\ref{eq:simple_output}), and (\ref{teq:relu_identiy}) to conclude that
    \begin{align*}
        \vec{r}[t] &= \Ar \vec{g}[t] =  \begin{pmatrix} 
        \relu(x[t]) - \relu(-x[t]) \\
        \vdots \vspace*{1mm}\\
        \relu(x[t-(L-1)]) - \relu(-x[t-(L-1)]) \\
        \end{pmatrix} = \begin{pmatrix} x[t] \\ \vdots \vspace*{1mm} \\ x[t-(L-1)] \end{pmatrix}, \\
        \rnnOut[t] &= \vec{k}^T \vec{r}[t] = \sum_{\ell=1}^{L} k_\ell \, x[t-(\ell-1)],
    \end{align*}
    which, in turn, completes the proof.
    \end{proof}
\end{lemma}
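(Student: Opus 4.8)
The plan is to give a fully constructive proof: I would exhibit explicit weight matrices $A_1, \Ah, \Ar, \Ao$ and bias vectors $b_1, b_2$ as functions of $\vec{k}$, and then verify the state recursion (\ref{leq:history}) by induction on $t$ together with the input-output relation (\ref{leq:output}) directly. The governing idea is to implement the hidden state as a tapped delay line of length $L-1$ that, at each step, discards its oldest entry, shifts every remaining entry down by one position, and inserts the fresh sample $x[t]$ at the top. The one genuine obstruction to realizing this linearly is the interposed ReLU nonlinearity; I would circumvent it using the elementary identity $x = \relu(x) - \relu(-x)$, which recovers any real number from its positive and negative parts and thereby lets arbitrary values propagate through the activation undistorted.

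Concretely, the network input at time $t$ is the concatenation of $x[t]$ with $\h{t-1} \in \R^{L-1}$, an $L$-dimensional vector. I would choose $A_1$ to be the $2L \times L$ matrix stacking $\Imat{L}$ above $-\Imat{L}$ and set $b_1 = \Ovec{2L}$, so that $\vec{g}[t]$ in (\ref{eq:intermediate_activation}) lists every coordinate $\relu(x[t-j])$ above every $\relu(-x[t-j])$, making both parts available for readout. I would then pick $\Ah$ so that (\ref{eq:hidden_evolution_split}) forms, for the first $L-1$ taps, the differences $\relu(x[t-j]) - \relu(-x[t-j]) = x[t-j]$, while omitting the oldest tap. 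The induction is then immediate: the base case $\h{-1} = \Ovec{L-1}$ is forced by the convention $x[t]=0$ for $t<0$ in (\ref{eq:assum_x_right}); assuming (\ref{leq:history}) at $t-1$ fixes the components of $\vec{g}[t]$, and applying $\Ah$ together with the ReLU identity recovers (\ref{leq:history}) at $t$.

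For the output I would observe that (\ref{leq:output}) involves all $L$ delayed samples $x[t], \dots, x[t-(L-1)]$, one more than the state carries; rather than enlarge the state dimension, I would reconstruct the full length-$L$ delay vector inside the virtual representation $\vec{r}[t]$ of (\ref{eq:hidden_rep}) by taking $\Ar$ to form the $L$ differences $\relu(x[t-j]) - \relu(-x[t-j])$, and then set $\Ao = \vec{k}^T$, so that by (\ref{eq:simple_output}) the output $\rnnOut[t] = \vec{k}^T \vec{r}[t]$ equals exactly the claimed convolution.

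I expect no serious difficulty beyond the conceptual step of recognizing that the ReLU identity is precisely what makes a genuinely linear shift-and-readout realizable inside a nonlinear network; once the matrices are fixed, the remaining verification is routine dimension-tracking and a one-line induction. A minor point warranting care is the off-by-one bookkeeping that distinguishes the $L-1$ samples stored in the state from the $L$ samples needed at the output, which the virtual-representation device resolves cleanly.
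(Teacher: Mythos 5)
Your proposal is correct and follows essentially the same construction as the paper's own proof: the identical choices of $A_1$, $b_1$, $\Ah$, $\Ar$, and $\Ao=\vec{k}^T$, the ReLU identity $x=\relu(x)-\relu(-x)$ to pass values through the activation, and the induction on $t$ with base case forced by $x[t]=0$ for $t<0$. No differences worth noting.
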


\begin{remark}
An RNN realizing a time shift by $m$ instants, as needed in (\ref{eq:discret_LTV_intro}),
is now obtained from Lemma \ref{lem:rnn_lin_combi_of_history} by choosing the impulse response vector $k\,\in\,\R^{L}$ such that it has a $1$ in the $(m+1)$-th entry and zeros elsewhere.
\end{remark}

The next step in our program is to construct an RNN that realizes frequency shifts by integer multiples of $1/F$, again as needed in (\ref{eq:discret_LTV_intro}).
As this operation corresponds to multiplication of the input signal by a complex exponential, it produces complex outputs $y[\cdot]$. In slight abuse of Definition {\ref{def:rnn}}, where all weight matrices and bias vectors are real-valued, for ease of exposition, we will here allow complex weights in the output layer, specifically for the quantities $\Ao$ and $\Ar$ in ({\ref{eq:a2_split}}). As $\Ao$ and $\Ar$ do not appear in the state evolution equations ({\ref{eq:hidden_evolution_split}}) and ({\ref{eq:intermediate_activation}}), it is guaranteed that the activation function $\rho$ continues to be applied to real-valued quantities only. 

\begin{lemma}[RNNs can realize frequency shifts]\label{lem:rnn_modulation}
    Let $F\in\N$ and $f\in \{0, \dots, F-1\}$. There exists an RNN that realizes the input-output mapping
    \begin{equation}
        \rnnOut[t] = x[t]\, e^{2\pi i \frac{f}{F}t}, \quad \forall t\geq 0.  \label{eq:output-modulation}
    \end{equation}
    
    \begin{proof}
    Again the proof is constructive in the sense of specifying the RNN. Throughout the proof, unless explicitly stated otherwise, relations involving $t$ apply for all $t \geq 0$.
    We start by noting that the function $e^{2\pi i \frac{f}{F}t}$ is $F$-periodic in $t\in \N$. 
    This $F$-periodicity motivates the choice of an $(F-1)$-dimensional hidden state sequence $\h{t}\in \{0, 1\}^{(F-1)}$ encoding the current position within the fundamental period. Specifically, our construction will be seen to ensure
    \begin{equation}\label{eq:period_state_cond}
        \quad h_\ell[t] = \ind{ ((t+1) \bmod F)\, = \, \ell }, \quad \forall \ell \in \{1, \dots, F-1\}.
    \end{equation}
    The hidden state vector at time $t$ hence contains a one at 
    position $(t+1) \bmod F$ or equals the all-zeros vector at the end of each period, i.e., when $(t+1) \bmod F=0$. 
    We will realize (\ref{eq:period_state_cond}) by appropriate choice of the RNN weight matrices $A_1,A_2$ and bias vectors $b_1,b_2$ and the proof will proceed by induction. First, we note that for $h[t]$ in (\ref{eq:period_state_cond}) to constitute a valid hidden state sequence according to Definition \ref{def:rnn}, the initial state needs to satisfy $\h{-1}=\Ovec{F-1}$, which at the same time would constitute the base case $t=-1$ of the induction argument. The relation $\h{-1}=\Ovec{F-1}$ now follows
    independently of the choices for $A_1,A_2,b_1,b_2$ and is simply by virtue of the index $0$ not being contained in the set $\{1,\dots,F-1\}$ so that the truth function on the RHS of (\ref{eq:period_state_cond}) yields the all-zeros vector.
    For the induction step, we assume that (\ref{eq:period_state_cond}) holds for $t-1$ for some $t\geq0$, i.e., 
    $h_\ell[t-1] = \ind{ (t \bmod F)\, = \, \ell }, \, \forall \ell \in \{1, \dots, F-1\}$.
    Next, we set
    \begin{equation}\label{eq:period_weights_def}
        \periodMatrix{} := \begin{pmatrix}
        -1 & -1 & \dots & -1 \\
        1 & 0 & \dots & 0 \\
        0 & 1 & \dots & 0 \\
        \vdots &  & \ddots &\vdots \\[1mm]
        0 & 0 & \dots & 1
        \end{pmatrix} \in \R^{F \times (F-1)},
        \quad
        \periodBias := \begin{pmatrix}1 \\ 0 \\0 \\ \vdots \\[1mm] 0 \end{pmatrix} \in \R^F,
    \end{equation} 
    and define the sequence $\vec{e}[t] \in \{0, 1\}^F$ 
    according to
    \begin{equation}\label{teq:def_period_state_e}
        \vec{e}[t] := \periodMatrix \vec{h}[t-1] + b_e.
    \end{equation}
    Direct calculation now yields
    \begin{equation}\label{eq:period_state_e_characterisation}
        \qquad e_\ell[t] = \ind{(t \bmod F)+1 \, = \, \ell}, \quad \forall \ell \in \{1, \dots, F\}.
    \end{equation}
    That is, $\vec{e}[t]$ indicates its argument $t$, modulo $F$ to account for $F$-periodicity of $e^{2\pi i \frac{f}{F}t}$, 
    through a one at the corresponding position in the period and, unlike the state vector, never equals the all-zeros vector.
    We now use $\vec{e}[t]$ to construct an indicator function applied to the input signal.
    To this end, we first recall that RNNs according to Definition \ref{def:rnn} accept input signals in $\ell_{\infty}$, and set $C=\|x\|_{\ell_{\infty}}$.
    Next, for all $t \geq 0$, consider the sequence
    \begin{align}
    \begin{split}
        \vec{\xPeriod}[t] &= \relu \left (
        \begin{pNiceArray}{cc}
            \Ivec{F} & 2C \periodMatrix
        \end{pNiceArray}
            \begin{pmatrix}
                x[t] \\ \vec{h}[t-1]
            \end{pmatrix}
            + 2C  \periodBias - C \Ivec{F}
        \right ) 
        \\
        &=\relu \left (\Ivec{F}\, x[t] + 2C \vec{e}[t] - C \Ivec{F}
        \right),
        \end{split} \label{teq:period_x}
    \end{align}
    where we used (\ref{teq:def_period_state_e}).
    Equivalently, we can express (\ref{teq:period_x}) as
    \begin{equation}\label{teq:period_x_elements}
        \xPeriod_\ell[t] = \relu(x[t] +2C\ind{ ( t \bmod F ) + 1 \, = \, \ell} - C) = (x[t] + C) \ind{ ( t \bmod F ) + 1 \, = \, \ell},
    \end{equation}
    for $\ell \in \{1, \dots, F\}$, where we made use of $|x[t]| \leq C$.
    We proceed to set 
    \begin{equation}\label{eq:modulation_weights_layer_one}
        A_1 = \begin{pNiceArray}{cc}
            \Ivec{F} & 2C  \periodMatrix \\
            \Ovec{F} & \periodMatrix
        \end{pNiceArray} \in \R^{(2F) \times F},
        \qquad
        b_1 = \begin{pmatrix}
            2C \periodBias -C 1_{F} \\ \periodBias
        \end{pmatrix} \in \R^{2F},
    \end{equation}
    and $b_2=0_{F}$. Inserting into (\ref{eq:intermediate_activation}) yields
    \[
        \begin{split}\label{teq:modulation_g}
        \vec{g}[t] &= \relu\left(  
        A_1
        \begin{pmatrix}
            x[t] \\ \vec{h}[t-1]
        \end{pmatrix} + 
        b_1
        \right )
        =
        \begin{pNiceArray}{c}
            \vec{\xPeriod}[t]\\
            \vec{e}[t]
        \end{pNiceArray},
        \end{split}
    \]
    where we employed (\ref{teq:period_x}) and (\ref{teq:def_period_state_e}), and used the fact that
    $\relu{(\vec{\periodState}[t])} = \vec{\periodState}[t]$ as the entries of $\vec{\periodState}[t]$ equal either $0$ or $1$. 
    Next, we let
    \begin{align}
        \begin{split}\label{teq:last_layer_weights_modulation}
        \Ah & = \begin{pmatrix}
            \Omat_{F-1} &
            \Ovec{F-1} &
            \Imat{F-1} & 
            \Ovec{F-1}
        \end{pmatrix},
        \\
        \Ar & = \begin{pmatrix}
            \Imat{F} & -C\, \Imat{F}
        \end{pmatrix},
        \\ 
        \Ao & = \begin{pmatrix}
            e^{2\pi i \frac{0}{F} f} &
            e^{2\pi i \frac{1}{F} f} &
            \hdots &
            e^{2\pi i \frac{F-1}{F} f} 
        \end{pmatrix}.
        \end{split}
    \end{align}
    We are now ready to finalize the induction step.
    From $\vec{h}[t] = \Ah \vec{g}[t]$ we get
    \begin{equation}
        h_\ell[t] = e_\ell[t] = \ind{(t \bmod F) +1 \, = \, \ell} = \ind{((t+1) \bmod F) \, = \, \ell}, \, \forall \ell \in \{1, \dots, F-1\}, \label{eq:hleleq}
    \end{equation}
    where we used that 
    \begin{equation}
    (t \bmod F) + 1 = ((t+1) \bmod F), \label{eq:lmodF}
    \end{equation}
    for all $t$ with $(t \bmod F) \neq F-1$. For $t$ such that
    $(t \bmod F) = F-1$, the LHS of (\ref{eq:lmodF}) equals $F$ while the RHS is equal to $0$; as the indices $0$ and $F$ do not occur in the set $\{1, \dots, F-1\}$, we trivially have equality between the last two expressions in (\ref{eq:hleleq}). 
    This establishes (\ref{eq:period_state_cond}) and thereby completes the induction step. 
    
    It remains to prove that the input-output relation of the RNN specified along the way is, indeed, given by (\ref{eq:output-modulation}).
    Using (\ref{teq:last_layer_weights_modulation}), (\ref{teq:period_x_elements}), and (\ref{eq:period_state_e_characterisation}) in $\vec{r}[t] = \Ar \vec{g}[t]$, it follows that
    \begin{align}
    \begin{split}
        r_\ell[t] &= \xPeriod_\ell[t] - C e_\ell[t] \\
        &= (x[t] + C) \ind{ ( t \bmod F ) + 1\, = \, \ell} - C \ind{ ( t \bmod F ) + 1\, = \, \ell} \\
        &= x[t] \ind{ ( t \bmod F ) + 1\, = \, \ell}, \quad \forall \ell \in \{1, \dots, F\}.
    \end{split}
    \end{align}
    The output signal is hence given by 
    \begin{align}
        \begin{split}
        \rnnOut[t] &= \Ao \vec{r}[t] \\
        &= \sum_{\ell=1}^{F}   e^{2\pi i \frac{\ell-1}{F} f}  x[t]\, \ind{ ( t \bmod F ) + 1 = \ell}\\
        &= x[t]\, e^{2\pi i \frac{( t \bmod F )}{F} f} \\
        &= x[t]\, e^{2 \pi i \frac{t}{F} f},
        \end{split}
    \end{align}
    where, in the last step, we made use of the $F$-periodicity of $e^{2 \pi i  \frac{t}{F} f}$.
    This completes the proof.
    \end{proof}
\end{lemma}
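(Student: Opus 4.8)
The plan is to exploit the fact that the modulating sequence $e^{2\pi i \frac{f}{F}t}$ depends on $t$ only through $t \bmod F$, being $F$-periodic. This suggests building an RNN whose hidden state acts as a modulo-$F$ counter recording the current position within the fundamental period, and then using the output layer to multiply the input sample $x[t]$ by the correct one of the $F$ possible values $e^{2\pi i \frac{\ell}{F}f}$, $\ell \in \{0, \dots, F-1\}$. Since the output is complex-valued, I would---as permitted by the discussion preceding the lemma---allow $\Ao$ and $\Ar$ to carry complex entries while keeping the state-update weights real, so that $\relu$ is only ever applied to real arguments.

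First I would implement the counter. I would maintain a hidden state $\h{t} \in \{0,1\}^{F-1}$ that one-hot encodes $t \bmod F$, with the all-zeros vector reserved for the single wrap-around position, and prove the invariant $h_\ell[t] = \ind{((t+1)\bmod F) = \ell}$ by induction on $t$. The base case follows automatically from the initialization $\h{-1}=\Ovec{F-1}$ together with the convention $x[t]=0$ for $t<0$. For the induction step it is convenient to first form an auxiliary length-$F$ vector $\periodState[t] = \periodMatrix\,\h{t-1} + \periodBias$ that is \emph{always} exactly one-hot---it never vanishes, even at wrap-around---using a forward cyclic-shift matrix $\periodMatrix$ whose top row is all $-1$'s and a bias $\periodBias$ that injects a $1$ into the first slot. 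Truncating $\periodState[t]$ to its first $F-1$ entries then yields the advanced state $\h{t}$, and because all entries are $0$ or $1$ they survive $\relu$ unchanged.

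The heart of the construction---and the step I expect to be the main obstacle---is multiplicatively gating $x[t]$ by the period indicator $\periodState[t]$ using only affine maps and $\relu$. Since multiplication is not affine, I cannot form $x[t]\cdot e_\ell[t]$ directly. The trick I would use is to set $C=\norm{x}_{\seqSpace}$ and compute, slot-wise, $\xPeriod_\ell[t] = \relu(x[t] + 2C\,e_\ell[t] - C)$; because $|x[t]|\le C$, this equals $x[t]+C$ at the active slot and $0$ at every inactive slot. Subtracting $C\,e_\ell[t]$ in the representation layer $\Ar$ recovers $x[t]$ in the single active slot and $0$ elsewhere, after which applying $\Ao$ with entries $e^{2\pi i \frac{\ell-1}{F}f}$ sums the gated values to give $x[t]\,e^{2\pi i \frac{(t\bmod F)}{F}f} = x[t]\,e^{2\pi i \frac{t}{F}f}$ by $F$-periodicity. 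The delicate bookkeeping I would need to verify is the wrap-around edge case $(t\bmod F)=F-1$: there the advanced position is $F$, which lies outside the index set $\{1,\dots,F-1\}$ of the state, so $\h{t}$ correctly becomes all-zeros while $\periodState[t]$ stays one-hot, confirming that the two encodings remain consistent across the period boundary.
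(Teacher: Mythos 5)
Your proposal follows essentially the same route as the paper's proof: a one-hot modulo-$F$ counter in the hidden state with the all-zeros vector at wrap-around, an always-one-hot auxiliary vector $\vec{e}[t]=\periodMatrix\vec{h}[t-1]+\periodBias$, the gating identity $\relu(x[t]+2Ce_\ell[t]-C)=(x[t]+C)e_\ell[t]$ followed by subtraction of $C\vec{e}[t]$ in $\Ar$, and complex exponential weights in $\Ao$. The construction and the induction, including the $(t\bmod F)=F-1$ edge case, match the paper's argument, so the proposal is correct.
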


Having established the RNN realizations of the basic building blocks of the spreading representation (\ref{eq:discret_LTV_intro}), 
namely RNNs that realize time shifts (or, more generally, convolutions) and frequency shifts, we proceed to devise RNNs that implement weighted linear combinations of time-frequency shift operators.
This entails showing that linear combinations of compositions of time shift RNNs and frequency shift RNNs are again RNNs.
As opposed to feedforward networks where compositions and linear combinations trivially preserve the feedforward structure \cite{deepAT2019}, this is not obvious in the RNN case. The basic idea underlying the construction provided next is hidden-state sharing across component networks, which not only preserves the RNN structure, but also leads to an economical---in terms of the number of nonzero weights---RNN realization.

\begin{lemma}[RNNs can realize LTV systems]\label{lem:rnn_ltv_construction}
    Let $\maxTime, F \in \N$ and consider the spreading function $\approxSpread(\tau, f) \in \C$, $\tau \in \{0, \dots, \maxTime-1\}$, $f \in \{0, \dots, F-1\}$. There exists an RNN that realizes the input-output relation 
    \begin{equation}
         \rnnOut[t] = \sum_{\tau=0}^{\maxTime-1} \sum_{f=0}^{F-1}\approxSpread(\tau, f) x[t-\tau] e^{2\pi i \frac{f}{F} t}, \quad \forall t \geq 0. \label{eq:discrete_LTV}
    \end{equation}
    \begin{proof}

        There are two main components in the construction of the RNN realizing the desired input-output relation, namely the composition of time shift and frequency shift operators and weighted linear combinations thereof. The latter is easily realized through proper choice of the output layer weight matrix $A_2$, whereas the former requires more effort. Specifically, we will design the RNN such that its hidden state vector concatenates the hidden state vectors of the time shift and the frequency shift RNNs in Lemmas \ref{lem:rnn_lin_combi_of_history} and \ref{lem:rnn_modulation}, respectively, and that this concatenated hidden state vector follows the hidden state evolution equations of the constituent time shift and frequency shift networks. Concretely, our goal will be to design the RNN such that its hidden state vector\footnote{Note that the symbols $\hHist$ and $\hPS$ do not refer to derivatives of $\vec{h}$ in any form.}
        is given by
    \begin{equation}
    \vec{h}[t] = \begin{pmatrix} \vec{\hHist}[t] \\[1mm] \vec{\hPS}[t] \end{pmatrix}, \label{eq:super-hidden-state}
    \end{equation}
    where $\vec\hHist\in \R^{(\maxTime-1)}$ corresponds to the hidden state of the convolution RNN from Lemma \ref{lem:rnn_lin_combi_of_history} particularized for pure time shifts, and $\vec\hPS \in \{0, 1\}^{(F-1)}$ represents the hidden state of the frequency shift RNN in Lemma \ref{lem:rnn_modulation}. The component vector sequences $\vec{\hHist}[t]$ and $\vec{\hPS}[t]$ now need to
    follow the state evolution laws in (\ref{leq:history}) and (\ref{eq:period_state_cond}), respectively, i.e.,
    \begin{align}
        \qquad \hHist_\ell [t ] = x[t-(\ell-1)]&, \quad \forall \ell \in \{1, \dots, \maxTime-1\} \label{teq:h_part_history}\\
        \qquad \hPS_\ell [t ] = \ind{((t+1) \bmod F) \, = \, \ell}&, \quad \forall \ell \in \{1, \dots, F-1\},\label{teq:h_part_period_state}
    \end{align}
    both for all $t\geq 0$. The approach we follow will be as in the proofs of Lemmas \ref{lem:rnn_lin_combi_of_history} and \ref{lem:rnn_modulation}, namely, we proceed by induction and in the process specify the network weight matrices and bias vectors to make the induction work out. 
    The proof will be finalized by showing 
    how the state vector $h[t]$ following (\ref{teq:h_part_history}) and (\ref{teq:h_part_period_state}) leads to the desired overall input-output relation by proper choice of $A_2$.
    
    The base case $t=-1$ of the induction, i.e., $h[-1]=0_{\maxTime+F-2}$, follows as the base case in Lemma \ref{lem:rnn_lin_combi_of_history} is by virtue of 
    $x[t]=0, \forall t < 0$, and that in Lemma \ref{lem:rnn_modulation} holds as a consequence of the definition of the state vector. Notably, for both components, $\hHist_\ell [t]$ and $\hPS_\ell [t ]$, the base case follows independently of the choices of the weight matrices and bias vectors.
    We remark that the base case also establishes that the initial state $h[-1]$ of the hidden state sequence in (\ref{eq:super-hidden-state})---by virtue of being equal to the all zeros vector---conforms with Definition \ref{def:rnn}.
    
    To establish the induction step, we will have to choose $A_1,A_2,b_1,$ and $b_2$ appropriately. Concretely, we start by assuming that (\ref{teq:h_part_history}) and (\ref{teq:h_part_period_state}) hold for $t-1$ for some $t \geq 0$, and set
        \begin{align}
\NiceMatrixOptions
{nullify-dots,code-for-first-col = \color{blue},code-for-first-row=\color{blue}, code-for-last-col=\color{blue} }
        A_1 = \begin{pNiceArray}{cccc|c}[first-row, first-col]
        & \Hdotsfor[line-style={solid,<->},shorten=0pt]{4}^{\maxTime} & \Hdotsfor[line-style={solid,<->},shorten=0pt]{1}^{F-1} \\ 
         \Vdotsfor[line-style={solid,<->},shorten=2pt]{4}_{\maxTime F }
         & \Ivec{F} & \Ovec{F} & \dots & \Ovec{F}    &    2C \periodMatrix \\
         & \Ovec{F} & \Ivec{F} & \dots & \Ovec{F}    &    2C \periodMatrix  \\
         & \vdots & \vdots & \ddots & \vdots         &    \vdots  \\[1mm]
         & \Ovec{F} & \Ovec{F} & \dots & \Ivec{F}    &    2C \periodMatrix\\
         \hline 
         \Vdotsfor[line-style={solid,<->},shorten=2pt]{2}_{\hspace{-0.002cm} 2\maxTime}
         &  \Block{1-4}{   \Imat{\maxTime} } &&&   & \Block{2-1}{\Omat} \\
         & \Block{1-4}{   -\Imat{\maxTime} } &&&    & \\
         \hline
         \Vdotsfor[line-style={solid,<->},shorten=2pt]{1}_{F}
         & \Block{1-4}{\Omat} &&&                                        & \periodMatrix 
         \end{pNiceArray},
         \qquad
         b_1 = 
         \begin{pNiceArray}{c}
         2C\periodBias - C\Ivec{F} \\2C \periodBias -C\Ivec{F} \\ \vdots \\[1mm] 2C\periodBias - C\Ivec{F} \\ \hline 
         \Block{2-1}{\Omat} \\ \\ \hline \periodBias
         \end{pNiceArray} ,
    \end{align}
    where $\periodMatrix, \periodBias$ are as defined in (\ref{eq:period_weights_def}),
    $C=\|x\|_{\ell_{\infty}}$, and the unsubscripted $\Omat$ symbols stand for all zeros matrices of appropriate dimensions. The bias vector $b_2$ is chosen as $b_2=0_{D+F-1}$. The 
    first $\maxTime$ columns of $A_1$ operate on 
    \begin{equation}\label{teq:ltv_rnn_history}
    \begin{pmatrix} x[t] \\ \vec{\hHist}[t-1] \end{pmatrix}= \begin{pmatrix} x[t] \\ \vdots \\[1mm] x[t-(\maxTime-1)] \end{pmatrix}
    \end{equation}
    and the last $F-1$ columns multiply $\vec{\hPS}[t-1]$. 
    Further, $A_1$ is divided vertically into three parts. The first $\maxTime F$ rows produce, for each time shift (including the shift by $0$ time instants), a representation akin to (\ref{teq:period_x}), the middle $2\maxTime$ rows correspond to (\ref{teq:timeshift_A1}) in the time shift RNN construction, and
    the last $F$ rows pertain to the frequency shift RNN, specifically to (\ref{teq:def_period_state_e}).
    It is hence natural to think of $\vec{g}[t]$ from (\ref{eq:intermediate_activation}) in three parts according to
    \begin{equation}\label{teq:g_spread}
        \vec{g}[t] = \relu\left( A_1 \begin{pmatrix} x[t] \\ \vec{\hHist}[t-1] \\ \vec{\hPS}[t-1] \end{pmatrix} + b_1 \right) = \;
        \NiceMatrixOptions
{nullify-dots,code-for-first-col = \color{blue},code-for-first-row=\color{blue}, code-for-last-col=\color{blue} }
        \begin{pNiceArray}{c}[first-col]
            \Vdotsfor[line-style={solid,<->},shorten=2pt]{4}_{\hspace{-0.05cm} { \maxTime F}} &
            \vec{\xPeriod}[t,0] \\
            &  \vec{\xPeriod}[t,1]\\
            &  \vdots \\
            &  \vec{\xPeriod}[t,\maxTime-1]\\
            \hline
            \Vdotsfor[line-style={solid,<->},shorten=2pt]{1}_{\hspace{-0.05cm} { 2 \maxTime}}
            & \vec{\gHist}[t] \\
            \hline
            \Vdotsfor[line-style={solid,<->},shorten=2pt]{1}_{\hspace{-0.05cm} { F}}
            & \vec{\periodState}[t] \\
        \end{pNiceArray},
    \end{equation}
    where, following the steps leading to (\ref{teq:period_x_elements}) in Lemma \ref{lem:rnn_modulation}, the vectors $\vec{\xPeriod}[t, \tau]$, $\tau \in \{0, \dots, \maxTime-1 \}$, are obtained as
    \begin{equation}\label{teq:spread_x_period}
        \xPeriod_\ell[t, \tau] =(x[t-\tau] + C) \ind{ ( t \bmod F ) + 1 = \ell},
    \end{equation}
    $\vec{\gHist}[t]$ equals $g[t]$ in (\ref{teq:timeshift_g}) with $L=D$, and $\vec{\periodState}[t]$ is as in (\ref{eq:period_state_e_characterisation}).

    We proceed to specify $\Ah$, the submatrix of $A_2$, which maps to the next hidden state according to (\ref{eq:hidden_evolution_split}), as
        \begin{align}\label{eq:second_layer_weighs}
        \Ah = \; 
    \NiceMatrixOptions
{nullify-dots,code-for-first-col = \color{blue},code-for-first-row=\color{blue}, code-for-last-col=\color{blue} }
        \begin{pNiceArray}{c|cccc|cc}[first-col, first-row]
            &  \Hdotsfor[line-style={solid,<->},shorten=0pt]{1}^{\maxTime F} &  \Hdotsfor[line-style={solid,<->},shorten=0pt]{4}^{2\maxTime} & 
             \Hdotsfor[line-style={solid,<->},shorten=0pt]{2}^{F} 
             \\ 
            \Vdotsfor[line-style={solid,<->},shorten=2pt]{1}_{\hspace{-0.05cm} {\scriptscriptstyle \maxTime-1}} &  \Omat             & \Imat{\maxTime-1} & \Ovec{\maxTime-1} &  -\Imat{\maxTime-1} & \Ovec{\maxTime-1} &\Block{1-2}{\Omat} & 
            \\
            \hline 
            \Vdotsfor[line-style={solid,<->},shorten=2pt]{1}_{\hspace{0.2cm} {\scriptscriptstyle F-1}} &  \Omat             & \Block{1-4}{\Omat} &&&& \Imat{F-1} & \Ovec{F-1}
        \end{pNiceArray},
    \end{align}
    where again the unsubscripted $\Omat$ symbols refer to all-zeros matrices of appropriate dimensions. Carrying out the state transition for the concatenated hidden state vector (\ref{eq:super-hidden-state}) according to (\ref{eq:hidden_evolution_split}) with $A_h$ in (\ref{eq:second_layer_weighs}) and $g[t]$ in (\ref{teq:g_spread}) yields (\ref{teq:h_part_history}) directly and (\ref{teq:h_part_period_state}) upon using the last identity in (\ref{eq:hleleq}).

    Next, with the $F \times F$ (unnormalized) DFT matrix 
    $[\AF]_{f,n}=e^{2\pi i \frac{f}{F}n},f\in\{0,\dots,F-1\},\, n \in \{0,\dots,F-1\}$, 
    we define
        the vectors 
    \begin{align}\label{teq:modulatedXdef}
        \vec{\modulatedX}[t, \tau] := 
        \begin{pmatrix} \AF & -C\AF \end{pmatrix}
        \begin{pmatrix} \vec{\xPeriod}[t, \tau] \\ \vec{\periodState}[t] \end{pmatrix}, \quad \tau \in \{0,\dots,\maxTime-1\},
    \end{align}
    and note, by direct calculation, that
    \begin{align}\label{teq:modulated_f}
        \modulatedX_f[t, \tau] = x[t-\tau] e^{2\pi i \frac{f-1}{F} t}, \quad f \in \{ 1,\dots,F \}.
    \end{align}
    Now we stack the frequency-shifted versions of $x[t-\tau]$ in (\ref{teq:modulated_f}) in the virtual representation sequence $r[t]$ by setting
    \begin{align}
            \Ar & = \;
                    \NiceMatrixOptions
{nullify-dots,code-for-first-col = \color{blue},code-for-first-row=\color{blue}, code-for-last-col=\color{blue} }
            \begin{pNiceArray}{cccc | c | c}[first-col, first-row]
            & \Hdotsfor[line-style={solid,<->},shorten=0pt]{4}^{\maxTime F} &       \Hdotsfor[line-style={solid,<->},shorten=0pt]{1}^{2\maxTime} & 
             \Hdotsfor[line-style={solid,<->},shorten=0pt]{1}^{F} 
             \\
              \Vdotsfor[line-style={solid,<->},shorten=2pt]{4}_{\hspace{-0.05cm} {\scriptscriptstyle \maxTime F}}
            & \AF & \Omat & \dots & \Omat & \Omat & -C \AF 
            \\ 
            & \Omat & \AF & \dots & \Omat & \Omat & -C \AF 
            \\ 
            & \vdots & \vdots  & \ddots &  \vdots & \vdots & \vdots  
            \\[1mm]
            & \Omat & \Omat & \dots & \AF & \Omat & -C \AF \\ 
        \end{pNiceArray}, \label{eq:ar-matrix-ltv-rnn}
    \end{align}
    which yields 
    \begin{align}
        \vec{r}[t] = \Ar \vec{g}[t] = \begin{pmatrix}
            \vec\modulatedX[t, 0] \\
            \vec\modulatedX[t, 1] \\
            \vdots \\[1mm]
            \vec\modulatedX[t, \maxTime-1] \\
        \end{pmatrix}. \label{eq:ar-matrix-multiplication}
    \end{align}
    This finalizes the first part of the construction, namely the composition of time shifts and frequency shifts according to (\ref{teq:modulated_f}). 
    We are left with the weighted superposition of the time-frequency-shifted versions of $x[t]$ implementing the input-output relation (\ref{eq:discrete_LTV}).
    To this end, we set, for $\tau \in \{0, \dots, \maxTime-1 \}$,
    \begin{equation}\label{teq:def_spread_vec}
        \vec{\approxSpread}(\tau, \cdot) := \begin{pmatrix}
        \approxSpread(\tau, 0) \\
        \approxSpread(\tau, 1) \\
        \vdots \\[1mm]
        \approxSpread(\tau, F-1) 
        \end{pmatrix} \in \C^{F}
    \end{equation}
    and take
    \begin{equation}
        \Ao = \begin{pmatrix}
            \vec\approxSpread(0, \cdot)^T &
            \vec\approxSpread(1, \cdot)^T &
            \hdots &
            \vec\approxSpread(\maxTime-1, \cdot)^T 
        \end{pmatrix},
    \end{equation}
    which results in
    \begin{align}
       \rnnOut[t] &=  \Ao \vec{r}[t] = \sum_{\tau=0}^{\maxTime-1} \vec\approxSpread(\tau, \cdot)^T \vec\modulatedX[t, \tau] \\
        &= 
        \sum_{\tau=0}^{\maxTime-1} \sum_{f=0}^{F-1} \approxSpread(\tau, f) x[t-\tau] e^{2\pi i \frac{f}{F} t},
    \end{align}
    as desired.
    \end{proof}
\end{lemma}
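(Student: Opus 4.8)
The plan is to realize the weighted superposition of time-frequency shifts in (\ref{eq:discrete_LTV}) by a single RNN whose hidden state simultaneously carries the states of the time-shift RNN of Lemma \ref{lem:rnn_lin_combi_of_history} and the frequency-shift RNN of Lemma \ref{lem:rnn_modulation}. Concretely, I would posit the concatenated hidden state $\vec{h}[t]=(\vec{\hHist}[t]^T,\vec{\hPS}[t]^T)^T$, where $\vec{\hHist}\in\R^{\maxTime-1}$ holds the most recent input samples as in (\ref{leq:history}) (particularized to pure time shifts) and $\vec{\hPS}\in\{0,1\}^{F-1}$ encodes the position within the fundamental period as in (\ref{eq:period_state_cond}). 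The first goal is then to prove by induction over $t$ that, for a suitable choice of $A_1,A_2,b_1,b_2$, both component states obey their respective evolution laws (\ref{teq:h_part_history}) and (\ref{teq:h_part_period_state}) in lockstep. The base case $\vec{h}[-1]=\Ovec{\maxTime+F-2}$ is immediate and, as in the two constituent lemmas, holds independently of the weights.

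For the induction step I would design $A_1$ blockwise so that the intermediate activation $\vec{g}[t]$ in (\ref{eq:intermediate_activation}) decomposes into three parts. One block reproduces the time-shift mechanism of (\ref{teq:timeshift_A1}) --- i.e. $\relu(\pm x)$ applied to the concatenation of $x[t]$ and $\vec{\hHist}[t-1]$ --- feeding the update of $\vec{\hHist}$; a second block reproduces the period-state update (\ref{teq:def_period_state_e}) feeding the update of $\vec{\hPS}$; and a crucial third block computes, for every delay $\tau\in\{0,\dots,\maxTime-1\}$ simultaneously, the masked signal $\xPeriod_\ell[t,\tau]=(x[t-\tau]+C)\ind{(t\bmod F)+1=\ell}$ in analogy to (\ref{teq:period_x_elements}), using $C=\norm{x}_{\seqSpace}$. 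Choosing $\Ah$ to read off only the relevant coordinates of $\vec{g}[t]$ then advances both component states, closing the induction exactly as in Lemmas \ref{lem:rnn_lin_combi_of_history} and \ref{lem:rnn_modulation}.

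To produce the output I would exploit the fact that applying the (unnormalized) DFT matrix $\AF$ to the difference $\vec{\xPeriod}[t,\tau]-C\,\vec{\periodState}[t]$ converts the masked representation into the genuine modulated sample $\modulatedX_f[t,\tau]=x[t-\tau]\,e^{2\pi i\frac{f-1}{F}t}$; this is the step where the composition of a time shift with a frequency shift is effected. Stacking these across all $\tau$ via a block-diagonal matrix $\Ar$ fills the virtual representation $\vec{r}[t]$ with all $\maxTime F$ time-frequency-shifted copies of the input, and a single row $\Ao$ containing the spreading coefficients $\approxSpread(\tau,f)$ forms the desired weighted superposition (\ref{eq:discrete_LTV}).

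The main obstacle I anticipate is realizing the \emph{composition} of time and frequency shifts without stacking two RNNs on top of one another, which would destroy the single-hidden-layer structure required by Definition \ref{def:rnn}. Unlike feedforward networks, where composition trivially preserves the architecture, here the multiplication $x[t-\tau]\cdot e^{2\pi i f t/F}$ must be carried out jointly with the state updates inside one application of $\relu$. The resolution is to let the shared hidden state supply the past samples $x[t-\tau]$ (through $\vec{\hHist}$) and the current period index (through $\vec{\hPS}$) at the same time, and to precompute all masked products $\vec{\xPeriod}[t,\tau]$ in the hidden layer $\vec{g}[t]$, deferring the linear DFT combination and the spreading-coefficient weighting to the output matrix $A_2$, which involves no nonlinearity. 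Verifying that the identity (\ref{teq:relu_identiy}) and the indicator trick underlying (\ref{teq:period_x_elements}) remain valid under this sharing --- so that $\relu$ is never applied to spurious cross terms --- is the one place demanding careful bookkeeping.
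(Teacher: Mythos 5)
Your proposal is correct and follows essentially the same route as the paper's proof: a concatenated hidden state obeying the two evolution laws (\ref{teq:h_part_history}) and (\ref{teq:h_part_period_state}), a three-block $A_1$ producing the masked signals $\xPeriod_\ell[t,\tau]$ alongside the time-shift and period-state updates, and an output layer that applies block-diagonal DFT matrices to $\vec{\xPeriod}[t,\tau]-C\,\vec{\periodState}[t]$ before weighting by the spreading coefficients. You also correctly identify hidden-state sharing as the key device that preserves the single-hidden-layer structure, which is precisely the point the paper emphasizes.
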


We note that the RNN constructed in Lemma \ref{lem:rnn_ltv_construction} has $\mathcal{O}(\maxTime F)$ non-zero weights, i.e., the ``size'' of the network is proportional to the spread of the system it is to realize. 
This insight is based on the fact that the virtual representation sequence $r[t]$ is never actually manifested. Hence, by (\ref{eq:a2_split}) only the product $\Ao\Ar$ has to be stored instead of the (bigger) individual matrices $\Ao$ and $\Ar$. Finally, we remark that the magnitudes of the RNN weights in the proof of Lemma \ref{lem:rnn_ltv_construction} depend on the $\ell_{\infty}$-norm of the inputs the RNN accepts.

\section{Metric Entropy of LTI Systems} \label{sec:metric_entropy_lti}

Having established that RNNs can universally realize linear dynamical systems with network size proportional to the spread of the system, we proceed to develop a deepened and more quantitative theory along those lines. Specifically, we shall be interested in the approximation of classes of linear dynamical systems to within a prescribed worst-case (within the class) error $\epsilon$ through RNNs that can be specified by bitstrings of finite length. Of particular interest will be the scaling behavior of the required length of the bitstring as a function of $\epsilon$ and, in particular, whether RNNs can achieve the fundamental limit---over all possible system approximation methods---on this scaling behavior. Answering this question requires the concept of metric entropy of linear systems, a topic originating from control theory \cite{zamesConti, zamesDisc}. The aim of the present section is to
introduce this concept, with the presentation geared towards our purposes.
We restrict ourselves to LTI systems for conceptual reasons.

A linear dynamical system is time-invariant if the operator kernel 
$k[t,\tau]$ in (\ref{eq:ltv_imp_resp}) is a function of $\tau$ only, i.e., the input-output relation of the system is given by the 
convolution of the input signal $x[\cdot]$ with the impulse response $k[\cdot]$ according to
\begin{equation}\label{eq:sys_is_conv}
    (\sysOp x) [t] = \sum_{\tau=0}^{\infty} \impResp[\tau]x[t-\tau] =: (\impResp *x)[t],
\end{equation}
where, as before, we assume that $x[t]=0,\,k[t]=0$, for $t < 0$, that is we consider one-sided input signals and causal systems. 
We shall frequently make use of the one-sided $\mathcal{Z}$-transform for $\ell^2$-signals defined as\footnote{
Note the positive exponents of $z$ in the definition. This convention is chosen to maintain consistency with Definition \ref{def:hardy} below adopted from \cite{zamesDisc}. 
} 
\begin{equation}\label{eq:z_def}
    (\Z {x[\cdot]})(z) = \sum_{t=0}^{\infty} x[t]z^t,  \quad |z| < 1.
\end{equation}
Whenever there is no source of ambiguity, we shall use capital letters to denote the $\mathcal{Z}$-transform according to $X(z) = (\Z{x[\cdot]})(z)$. 
Next, we note the well-known relation
\begin{equation}\label{eq:conv_is_multiplication}
    (\Z{(\mathcal{L} x)[\cdot]})(z)=(\Z{(k*x)[\cdot]})(z) = K(z) \cdot X(z),
\end{equation}
where $K(z) := (\Z{\impResp[\cdot]})(z)$ is commonly referred to as the system's transfer function.  

We proceed to establish the concept of metric entropy 
of classes of LTI systems largely following \cite{zamesConti, zamesDisc}. On a conceptual level, this complexity notion allows to formulate answers to the following question:
\textit{Given a class of LTI systems, how many bits of information do we need to identify a specific system in the class to within a prescribed error?}
To formalize matters, we start by defining the metric entropy of general sets.

\begin{definition}[\cite{Wainwright2019}] \label{def:metric_entropy}
  Let $(\mathcal{X}, \rho)$ be  a metric space. An $\epsilon$-covering of a compact set $\mathcal{C} \subseteq \mathcal{X}$ with respect to the metric $\rho$ is a set of points $\{ x_1, \dots, x_N \} \subset \mathcal{C}$ such that for each $x \in \mathcal{C}$, there exists an $i \in [1, N]$ so that $\rho(x, x_i) \leq \epsilon$. The $\epsilon$-covering number $N(\epsilon; \mathcal{C}, \rho)$ is the cardinality of a smallest $\epsilon$-covering of $\mathcal{C}$ and $\mEnt(\epsilon; \mathcal{C}, \rho) := \log_2( N(\epsilon; \mathcal{C}, \rho))$ is the metric entropy of $\mathcal{C}$.
\end{definition}

As LTI systems are uniquely determined by their impulse response, we shall incarnate the concept of ``classes of LTI systems" by considering compact sets of impulse responses. More specifically, motivated by \cite{zamesDisc}, we consider systems with exponentially decaying impulse response, that is, the 
set of LTI systems 
characterized by
\begin{equation}\label{eq:decay_imp_resp}
    \mathcal{C}(\mC{}, \eC{}) := \{ \sysOp \mid |k_{\sysOp}[t]| \leq \mC{}e^{-\eC{}t},\, \forall t\geq 0 \}, \quad \mC{},\eC{} > 0,\end{equation}
where $\impResp_{\sysOp}[\cdot]$ denotes the impulse response of the system $\sysOp$. 
The constants $\eC{}$ and $\mC{}$ quantify the decay behavior of the system memory. Note that the set $\mathcal{C}(\mC{}, \eC{})$ encompasses 
exponentially decaying impulse responses of arbitrary decay rate according to
\begin{equation}
\mathcal{C}(\mC{}, \log(1/\expBaseC{})) = \{ \sysOp \mid |k_{\sysOp}[t]| \leq \mC{}\expBaseC{}^{t},\, \forall t\geq 0 \}, \quad \mC{}>0, \, \expBaseC{} \in (0,1). \label{eq:exp-decay-general-exponent}
\end{equation}

Next, we equip the ambient space $\mathcal{X}$ with a suitable metric which quantifies the distance between LTI systems, or equivalently their impulse responses. To this end, we first define
Hardy spaces and norms of transfer functions as follows. 
\begin{definition}[{\cite[Chapter~17]{Rudin1987}}] \label{def:hardy} 
For the transfer function $K(z)$, we define the Hardy norms    \begin{align}
        \norm{\transF}_{\Hardy{2}} &:= \sqrt { \sup_{r<1} \frac{1}{2\pi}\int_0^{2\pi} |\transF(re^{i\theta})|^2 d\theta, }\\
        \norm{\transF}_{\Hardy{\infty}} &:= \sup_{|z|<1} |\transF(z)|.
    \end{align}
The corresponding Hardy spaces are given by $\Hardy{2}=\{K(\cdot)\,|\, \norm{\transF}_{\Hardy{2}} < \infty\}$ and $\Hardy{\infty}=\{K(\cdot)\,|\, \norm{\transF}_{\Hardy{\infty}} < \infty\}$.
\end{definition}
The distance between the LTI systems $\sysOp$ and $\sysOp'$ with transfer functions $\transF(z)$ and $\transF'(z)$, respectively, both in $\Hardy{\infty}$, is now defined as
\begin{equation}\label{eq:metric}
    \rho(\sysOp, \sysOp') := \norm{\transF - \transF'}_{\Hardy{\infty}}.
\end{equation}
The following result relates $\rho(\sysOp, \sysOp')$ to distance---in terms of squared error---in the system output space.

\begin{theorem}\label{thm:dist-hardy-output}
    Let $\sysOp$ and $\sysOp'$ be LTI systems with corresponding transfer functions $K(z)$ and $K'(z)$, both in $\Hardy{\infty}$. 
    It holds that
    \[
        \rho(\sysOp, \sysOp') = \sup_{\norm{x}_{\ell^2} = 1} \norm{\sysOp x - \sysOp' x}_{\ell^2}.
    \]
    \begin{proof}
    The proof is established through the following chain of arguments
    \begin{align}
        \rho(\sysOp, \sysOp') &= \norm{\transF - \transF'}_{\Hardy{\infty}} \\
        &= \sup_{X\,\in\,\Hardy{2}} \frac{\norm{(\transF-\transF')X}_{\Hardy{2}}}{\norm{X}_{\Hardy{2}}} \label{teq:inf-l2-equiv}\\
        &=\sup_{\norm{X}_{\Hardy{2}} = 1} \norm{(\transF - \transF')X}_{\Hardy{2}} \\
           &= \sup_{\norm{x}_{\ell^2} = 1} \norm{\impResp* x - \impResp'* x}_{\ell^2} \label{teq:52}\\
           &= \sup_{\norm{x}_{\ell^2} = 1} \norm{\sysOp x - \sysOp' x}_{\ell^2},
       \end{align}
    where (\ref{teq:inf-l2-equiv}) follows from Theorem \ref{thm:opNorm_eq_infty} upon noting that $K-K'\,\in\,\Hardy{\infty}$ by application of the triangle inequality and (\ref{teq:52}) is by
    Theorem \ref{thm:z_isometry} together with (\ref{eq:conv_is_multiplication}), where $k$ and $k'$ denote the impulse responses of the systems $\mathcal{L}$ and $\mathcal{L}'$, respectively.
    \end{proof}
\end{theorem}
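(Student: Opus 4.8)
The plan is to recognize that both sides of the claimed identity are, up to the dictionary between the time domain and the $\mathcal{Z}$-domain, two expressions for the norm of the multiplication operator induced by a single transfer function. Since $\sysOp$ and $\sysOp'$ are LTI, their difference $\sysOp-\sysOp'$ is again LTI, with impulse response $\impResp-\impResp'$ and transfer function $\transF-\transF'$; by the triangle inequality $\transF-\transF'\in\Hardy{\infty}$. It therefore suffices to treat a single generic $\Hardy{\infty}$ transfer function $G:=\transF-\transF'$ and to prove the operator-norm identity $\norm{G}_{\Hardy{\infty}}=\sup_{\norm{x}_{\ell^2}=1}\norm{(\impResp-\impResp')*x}_{\ell^2}$. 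The left-hand side is $\rho(\sysOp,\sysOp')$ by the definition in (\ref{eq:metric}), while the right-hand side is exactly $\sup_{\norm{x}_{\ell^2}=1}\norm{\sysOp x-\sysOp' x}_{\ell^2}$ by (\ref{eq:sys_is_conv}), so the whole theorem collapses to this one identity.

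First I would transport the problem from the sequence domain into the Hardy space $\Hardy{2}$. Invoking that the one-sided $\mathcal{Z}$-transform in (\ref{eq:z_def}) is an isometry from $\ell^2$ onto $\Hardy{2}$ (the Plancherel/Parseval statement on the unit circle, i.e. the referenced Theorem~\ref{thm:z_isometry}), together with the fact that time-domain convolution corresponds to multiplication of transfer functions as in (\ref{eq:conv_is_multiplication}), the quantity $\sup_{\norm{x}_{\ell^2}=1}\norm{(\impResp-\impResp')*x}_{\ell^2}$ becomes $\sup_{\norm{X}_{\Hardy{2}}=1}\norm{GX}_{\Hardy{2}}$, which is precisely the operator norm $\norm{M_G}$ of the multiplication operator $M_G\colon X\mapsto GX$ acting on $\Hardy{2}$. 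The remaining task is thus the multiplier identity $\norm{M_G}=\norm{G}_{\Hardy{\infty}}$, which is the referenced Theorem~\ref{thm:opNorm_eq_infty}.

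The substantive step — and where I expect the main obstacle to lie — is establishing this multiplier identity in both directions. The bound $\norm{M_G}\le\norm{G}_{\Hardy{\infty}}$ is immediate from the pointwise estimate $|G(z)X(z)|\le\norm{G}_{\Hardy{\infty}}|X(z)|$, valid on every circle $|z|=r<1$, which upon integrating in $\theta$ and taking the supremum over $r$ yields $\norm{GX}_{\Hardy{2}}\le\norm{G}_{\Hardy{\infty}}\norm{X}_{\Hardy{2}}$. For the reverse bound I would pass to the adjoint $M_G^{*}$ and test it against the Szeg\H{o} reproducing kernels $\kappa_w(z)=(1-\bar w z)^{-1}$, $|w|<1$, which satisfy $M_G^{*}\kappa_w=\overline{G(w)}\,\kappa_w$. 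This gives $\norm{M_G}=\norm{M_G^{*}}\ge|G(w)|$ for every $w$ in the open disk, and taking the supremum over $|w|<1$ produces $\norm{M_G}\ge\sup_{|w|<1}|G(w)|=\norm{G}_{\Hardy{\infty}}$ by the definition of the $\Hardy{\infty}$ norm in Definition~\ref{def:hardy}. The delicate points here are the well-definedness and boundedness of $M_G$ on $\Hardy{2}$ and the verification of the reproducing-kernel eigenrelation; I would emphasize that, because Definition~\ref{def:hardy} takes the supremum over the \emph{open} disk rather than on the boundary, no separate boundary-value or peaking-function limiting argument is needed.

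With the multiplier identity in hand, the proof is finished by chaining the equalities $\rho(\sysOp,\sysOp')=\norm{\transF-\transF'}_{\Hardy{\infty}}=\norm{M_{\transF-\transF'}}=\sup_{\norm{X}_{\Hardy{2}}=1}\norm{(\transF-\transF')X}_{\Hardy{2}}=\sup_{\norm{x}_{\ell^2}=1}\norm{\sysOp x-\sysOp' x}_{\ell^2}$, where the last equality simply reverses the isometry-and-convolution identification of the second paragraph. In short, my plan isolates the multiplier identity (Theorem~\ref{thm:opNorm_eq_infty}) as the one genuinely analytic ingredient and treats the $\mathcal{Z}$-transform isometry (Theorem~\ref{thm:z_isometry}) and the convolution theorem (\ref{eq:conv_is_multiplication}) as routine bookkeeping.
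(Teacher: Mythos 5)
Your proposal is correct and follows essentially the same route as the paper: reduce $\rho(\sysOp,\sysOp')$ to the norm of the multiplication operator $X\mapsto (\transF-\transF')X$ on $\Hardy{2}$ via the multiplier identity (Theorem~\ref{thm:opNorm_eq_infty}), then translate back to $\ell^2$ using the $\mathcal{Z}$-transform isometry (Theorem~\ref{thm:z_isometry}) and the convolution theorem (\ref{eq:conv_is_multiplication}), noting $\transF-\transF'\in\Hardy{\infty}$ by the triangle inequality. The only divergence is in your optional sketch of the multiplier identity itself: your lower bound $\norm{M_G}\geq\norm{G}_{\Hardy{\infty}}$ via the adjoint acting on the Szeg\H{o} kernels $\kappa_w(z)=(1-\bar w z)^{-1}$ is a valid and arguably slicker alternative to the paper's argument in Theorem~\ref{thm:opNorm_eq_infty}, which instead iterates the operator ($K^n$) and derives a contradiction from continuity of $K$ on a small arc; both are standard and correct.
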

Theorem \ref{thm:dist-hardy-output} shows that identifying a reference system $\mathcal{L}$ to within error $\rho(\sysOp, \sysOp')=\epsilon$ guarantees that the estimated system $\mathcal{L}'$ results in output signals that deviate no more than $\epsilon$---in $\ell_2$-norm---from the output that would be produced by the reference system $\mathcal{L}$.

We are now ready to recall a result due to Zames and Owen \cite{zamesDisc} which quantifies the metric entropy of $\mathcal{C}(\mC{}, \eC{})$ with respect to the distance measure $\rho(\sysOp, \sysOp')$. 

\begin{theorem}[\cite{zamesDisc}] \label{zames-covering}
Let $\mC{}, \eC{} > 0$ and consider the set 
\[
    \mathcal{C}(\mC{}, \eC{})  = \{ \sysOp \mid |k_{\sysOp}[t]| \leq \mC{}e^{-\eC{}t},\, \forall t\geq 0\}.
\]
The metric entropy of $\mathcal{C}(\mC{}, \eC{})$ with respect to 
\begin{equation}
    \rho(\sysOp, \sysOp')  = \norm{\transF - \transF'}_{\Hardy{\infty}} \label{eq:sys-dist}
\end{equation}
satisfies
\begin{equation}\label{eq:zames_rate}
    \mEnt(\epsilon; \mathcal{C}(\mC{}, \eC{}), \rho) \thicksim \frac{1}{\eC{}}\left(\log\left(\frac{\mC{}}{\epsilon}\right)\right)^2.
\end{equation}

\end{theorem}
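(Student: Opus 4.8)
The plan is to establish matching upper and lower bounds on the covering number by exploiting the norm sandwich $\norm{\transF}_{\Hardy{2}} \le \norm{\transF}_{\Hardy{\infty}} \le \norm{\impResp}_{\ell^1}$, valid because $|\transF(z)| \le \sum_t |k[t]|\,|z|^t \le \sum_t|k[t]|$ on the disk (giving the right inequality) and because the $\Hardy{2}$ norm is an $L^2$-average over the circle dominated by the $L^\infty$-supremum (giving the left one). By Parseval, $\norm{\transF}_{\Hardy{2}}^2 = \sum_t |k[t]|^2$, so under $\Hardy{2}$ the class $\mathcal{C}(a,b)$ is exactly the coordinate box $\{\impResp : |k[t]| \le a e^{-bt}\}$ equipped with the $\ell^2$ metric---a geometry I can analyze directly. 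This two-sided comparison is the device that lets me replace the awkward $\Hardy{\infty}$ metric by the tractable $\ell^1$ (for covering) and $\ell^2$ (for packing) surrogates.

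For the upper bound I would first truncate: since $\sum_{t \ge N} a e^{-bt} = a e^{-bN}/(1-e^{-b})$, discarding all coefficients with index $\ge N$ incurs an $\Hardy{\infty}$ error below any fixed fraction of $\epsilon$ as soon as $N \gtrsim \frac1b \log(a/\epsilon)$. I would then quantize each retained coefficient $k[t] \in [-ae^{-bt}, ae^{-bt}]$ on a uniform grid, choosing the per-coordinate resolution so that the total $\ell^1$ coefficient error---hence the $\Hardy{\infty}$ error---is at most $\epsilon$. A Lagrange (water-filling) allocation shows the optimal choice assigns a roughly constant \emph{absolute} resolution $\delta$ across coordinates, so coordinate $t$ consumes about $\log(a e^{-bt}/\delta)$ bits; summing these over the $\sim \frac1b\log(a/\epsilon)$ active coordinates (those with $ae^{-bt}>\delta$) produces a total of order $\frac1b(\log(a/\epsilon))^2$, bounding $\mEnt(\epsilon;\mathcal{C}(a,b),\rho)$ from above.

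For the lower bound I would use that $\norm{\cdot}_{\Hardy{\infty}} \ge \norm{\cdot}_{\Hardy{2}}$ forces any $\epsilon$-separated family in $\Hardy{2}$ to remain $\epsilon$-separated in $\Hardy{\infty}$; it therefore suffices to pack the $\ell^2$ box above. A volumetric packing estimate for this exponentially shrinking box---equivalently, counting $\pm$ sign patterns on a suitable sub-box whose sides are calibrated to the $\ell^2$ radius $\epsilon$---again yields $\sim \frac1b(\log(a/\epsilon))^2$ separated systems, giving a matching lower bound on $\mEnt$.

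The main obstacle is reconciling the two constants. The upper bound is most naturally controlled in $\ell^1$ and the lower bound in $\ell^2=\Hardy{2}$, and these surrogates for the genuine $\Hardy{\infty}$ geometry do not \emph{a priori} return the same leading coefficient (the sup of a trigonometric polynomial sits strictly between its $\ell^2$ and $\ell^1$ coefficient norms). The delicate part is to show that the exponential decay forces the effective dimension $\frac1b\log(a/\epsilon)$ and the per-coordinate bit budget to align to leading order under all three norms, and to verify that the coupled dependence of the truncation length $N(\epsilon)$ and the grid resolution $\delta$ on $\epsilon$ contributes only lower-order corrections---so that the ratio to $\frac1b(\log(a/\epsilon))^2$ tends to $1$ rather than merely staying bounded. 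Upgrading two-sided order bounds to the sharp asymptotic equivalence ``$\sim$'' is where the real care is required, and is presumably the crux of the Zames--Owen argument.
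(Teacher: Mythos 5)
This theorem is imported from \cite{zamesDisc}; the paper offers no proof beyond the remark that the lower bound (\ref{eq:lower_bound}) follows from an embedding argument and the upper bound (\ref{eq:upper_bound}) from an explicit $\epsilon$-covering. Your outline coincides with that method. Your truncate-then-uniformly-quantize covering is, almost verbatim, the construction the paper itself carries out in Theorem \ref{thm:iir_quant_error} (truncation length $M\approx\frac{1}{\eC}\log(\mC/\epsilon)$, resolution $\delta=\epsilon/(2M)$, per-coefficient budget $\lceil\log_2(\mC e^{-\eC t}/\delta)\rceil+1$ bits, controlled through Lemma \ref{lem:hard_diff_l1_bound}, which is exactly your $\norm{K}_{\Hardy{\infty}}\le\norm{k}_{\ell^1}$ bound); and your sign-pattern packing of the $\Hardy{2}$ box, using $\norm{\cdot}_{\Hardy{2}}\le\norm{\cdot}_{\Hardy{\infty}}$ together with the isometry of Theorem \ref{thm:z_isometry}, is the embedding argument alluded to for the lower bound. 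So the route is the intended one.

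Two remarks on the obstacle you single out. First, it is less severe than you fear: the $\ell^1$ (covering) and $\ell^2$ (packing) surrogates differ only in the admissible per-coordinate resolution, $\delta\asymp\epsilon/N$ versus $\delta\asymp\epsilon$, and since the effective dimension is $N\asymp\frac{1}{\eC}\log(\mC/\epsilon)$ this perturbs the bit count by only $N\log N=o\bigl((\log(1/\epsilon))^2\bigr)$. Both bounds are governed by the same triangle sum $\sum_{t=0}^{N-1}\bigl(\log(\mC/\epsilon)-\eC t\bigr)=\frac{1}{2\eC}(\log(\mC/\epsilon))^2+O(\log(1/\epsilon))$, so your plan does deliver a sharp asymptotic equivalence rather than mere two-sided order bounds. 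Second, a caveat: carried out literally, both of your bounds produce the leading coefficient $\frac{\log_2 e}{2\eC}$ for $\log_2$ of the covering number---which is also precisely what the paper's own encoding in Theorem \ref{thm:iir_quant_error} attains before being loosely majorized by $\frac{1}{\eC}$ via $\log_2 e\le 2$. Reconciling this with the constant $\frac{1}{\eC}$ displayed in (\ref{eq:zames_rate}) is a matter of the normalization conventions of \cite{zamesDisc} (base of logarithm, real versus complex impulse-response coefficients), not of your argument; but it is the one point where your computation, as written, would not reproduce the constant as stated.
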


Note that for all systems $\mathcal{L} \, \in \, \mathcal{C}(\mC{}, \eC{})$, the transfer function is in $\Hardy{\infty}$, which by application of the triangle inequality, shows that (\ref{eq:sys-dist}) is well-defined. The proof of Theorem \ref{zames-covering} proceeds by establishing lower and upper bounds on metric entropy according to
   \begin{equation}\label{eq:lower_bound}
       \frac{1}{\eC}\left(\log\left(\frac{\mC}{\epsilon}\right)\right)^2 - o\left(\left(\log\left(\frac{\mC}{\epsilon}\right)\right)^2  \right)  \leq \entOfSetLazy 
  \end{equation}
  and
  \begin{equation}\label{eq:upper_bound}
        \entOfSetLazy  \leq \frac{1}{\eC}\left(\log\left(\frac{\mC}{\epsilon}\right)\right)^2 + o\left(\left(\log\left(\frac{\mC}{\epsilon}\right)\right)^2  \right), 
  \end{equation}
  respectively, where $g(\epsilon)=o\left(f(\epsilon)\right)$ stands for $\lim_{\epsilon \rightarrow 0 }\left|\frac{g(\epsilon)}{f(\epsilon)}\right| = 0$. 
  The lower bound (\ref{eq:lower_bound}) is derived through an embedding argument and the upper bound is obtained by constructing an explicit $\eps$-covering \cite{zamesDisc}.
  
  We note that the metric entropy in (\ref{eq:zames_rate}) scaling according to $(\log(1/\epsilon))^2$ 
  shows that the set $\mathcal{C}(\mC{},\eC{})$ is not overly massive. Richer function classes such as the set of all Lipschitz functions from $[0,1]^d$ to $\R$ with a given Lipschitz constant have metric entropy scaling according to $\epsilon^{-d}$ \cite{Wainwright2019}. 
  Moreover, it follows from (\ref{eq:zames_rate}) 
  that impulse responses of slower (exponential) decay, i.e., with smaller $\eC{}$, are more complex to describe.
  
\section{Optimal Covering Through Quantized RNNs}\label{sec:quant_rnn}

We are now in a position to state the second central result of this paper. Specifically, we show that RNNs with suitably quantized weights provide an optimal---in the sense of Theorem \ref{zames-covering}---$\epsilon$-covering of $\mathcal{C}(\mC{}, \eC{})$ with respect to the metric $\rho({\cal L},{\cal L}')$. Operationally, this means that RNNs can optimally---in the sense of metric entropy---learn (or identify) the class of LTI systems with exponentially decaying impulse response. This result quantifies what is possible in principle and thereby provides a benchmark against which practical learning algorithms can be assessed.

The results presented so far apply to RNNs with real-valued weights. Constructing an optimal covering through RNNs requires, however, encoding of the approximating RNNs into bitstrings of length scaling in the approximation error $\eps$ according to (\ref{eq:zames_rate}). Now, there are two components that go into such an encoding of RNNs, namely the values of the nonzero weights in the matrices $A_1,A_2$ and the vectors $b_1,b_2$ and the locations of these weights, i.e., the topology of the network. The former requires quantization of the weights at a resolution that scales adequately in $\epsilon$. We shall see below that encoding the topology is a non-issue. The main technical problem hence resides in ensuring that weight quantization in the approximating RNN can be effected at a resolution that allows metric entropy optimality---in terms of the covering realized---and at the same time guarantees that the resulting error incurred at the system output consorts with the desired approximation accuracy.

We start by defining an RNN weight quantization scheme.

\begin{definition}[Quantized weights] \label{def:quant}
    For $\delta>0$, define the set
    \begin{equation}
            \quantSet  := \{\delta k \mid k \in \mathbb{Z} \}.
        \end{equation}
    We say that an RNN has $\delta$-quantized weights if all its weights are in $\quantSet \, \cup \, \{-1,1\}$. Further, define the quantization
    function 
    \begin{equation}
        \quantF{w}:= sign(w)  \floor*{\frac{|w|}{\delta}}  \delta, \quad w \in \R.
    \end{equation}
    Clearly, we have $|\quantF{w} - w| \leq \delta$ and $|\quantF{w}| \leq |w|$.
\end{definition}

The main idea underlying the proof of the optimal RNN covering result builds on the approximation of the exponentially decaying impulse responses
in $\mathcal{C}(\mC{}, \eC{})$ through finite impulse response (FIR) filters of suitable length and with suitably quantized impulse response coefficients. In order to quantify the approximation error---in terms of $\rho(\sysOp, \sysOp') = \norm{\transF - \transF'}_{\Hardy{\infty}}$---resulting from this truncation and coefficient quantization, we will need the following
simple technical result.
\begin{lemma}\label{lem:hard_diff_l1_bound}
    Consider the LTI systems with impulse responses $k[\cdot]$ and $\widetilde{k}[\cdot]$ and corresponding transfer functions $K(z)$ and $\widetilde{K}(z)$, both in $\Hardy{\infty}$. We have
    \begin{equation*}
        \norm{K(\cdot) - \widetilde{K}(\cdot)}_{\Hardy{\infty}}  \leq \sum_{t=0}^{\infty} |k[t]-\widetilde{k}[t]|.
    \end{equation*}
    \begin{proof}
    The proof is by the following chain of relations 
    \begin{align*}
      \norm{K(\cdot) - \widetilde{K}(\cdot)}_\Hardy{\infty} &= 
      \sup_{|z|<1}\left | \sum_{t=0}^\infty k[t]z^{t} - \sum_{t=0}^\infty \widetilde{k}[t]z^{t} \right| \\
      & = \sup_{|z|<1}\left | \sum_{t=0}^\infty (k[t] - \widetilde{k}[t])z^{t} \right| \\
      &\leq \sup_{|z|<1}  \sum_{t=0}^\infty |k[t] - \widetilde{k}[t] | |z|^{t} \\
      &= \sum_{t=0}^\infty |k[t]- \widetilde{k}[t]|. &&\qedhere
     \end{align*}
    \end{proof}
\end{lemma}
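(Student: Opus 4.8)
The plan is to establish the bound by direct manipulation of the $\Hardy{\infty}$ norm together with the power-series representation of the transfer functions, so that the inequality falls out of a single application of the triangle inequality. First I would invoke Definition~\ref{def:hardy} to write $\norm{K(\cdot) - \widetilde{K}(\cdot)}_{\Hardy{\infty}} = \sup_{|z|<1} |K(z) - \widetilde{K}(z)|$, and then substitute the $\mathcal{Z}$-transform representations $K(z) = \sum_{t=0}^{\infty} k[t] z^{t}$ and $\widetilde{K}(z) = \sum_{t=0}^{\infty} \widetilde{k}[t] z^{t}$ from (\ref{eq:z_def}), both valid for $|z|<1$. Combining the two series into the single series $\sum_{t=0}^{\infty} (k[t] - \widetilde{k}[t]) z^{t}$ reduces the task to bounding the modulus of this series uniformly over the open unit disk.

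The core of the argument is then a termwise application of the triangle inequality: for every fixed $z$ with $|z|<1$ we have $\bigl|\sum_{t=0}^{\infty} (k[t]-\widetilde{k}[t]) z^{t}\bigr| \le \sum_{t=0}^{\infty} |k[t]-\widetilde{k}[t]|\, |z|^{t}$. Since $|z|<1$ forces $|z|^{t} \le 1$ for all $t \ge 0$, the right-hand side is further bounded above by $\sum_{t=0}^{\infty} |k[t]-\widetilde{k}[t]|$, which no longer depends on $z$. As this $z$-independent quantity dominates $|K(z)-\widetilde{K}(z)|$ for every admissible $z$, taking the supremum over $|z|<1$ on the left yields precisely the claimed inequality.

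I do not expect any genuine obstacle here; the statement is essentially immediate from the definitions. The only point warranting a moment's care is the legitimacy of the termwise bounding of the series. If $\sum_{t=0}^{\infty} |k[t]-\widetilde{k}[t]|$ is infinite the asserted inequality holds trivially, so one may assume it is finite, in which case the series $\sum_{t=0}^{\infty} (k[t]-\widetilde{k}[t]) z^{t}$ converges absolutely and uniformly on $|z| \le 1$ and all the rearrangements above are justified. Under this convention the chain of relations can be written out verbatim, giving a short, self-contained proof.
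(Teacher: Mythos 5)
Your proposal is correct and follows essentially the same route as the paper's proof: expand $K-\widetilde{K}$ as a power series on the open unit disk, apply the triangle inequality termwise, and bound $|z|^t$ by $1$ before taking the supremum. The extra remark on absolute convergence when the right-hand side is finite is a harmless refinement the paper omits.
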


We are now ready to state the main result.

\begin{theorem}[RNNs are metric-entropy-optimal]\label{thm:iir_quant_error}
  Consider an LTI system $\sysOp{}$ with impulse response satisfying $|k[t]| \leq \mC{}\,e^{-\eC{} t}, \, \forall t \geq 0$, for some $\mC{}, \eC{} > 0$, 
  and corresponding transfer function $K(z)$. 
  For every $\epsilon >0$, with
  \[ M:= \ceil*{\frac{1}{\eC{}}\log \left(\frac{\mC{}}{\epsilon} \right) + \frac{1}{\eC{}}\log \left(\frac{2}{1-e^{-\eC}} \right) }, \]  
  $\sysOp{}$ can be approximated by a $\delta := \frac{\epsilon}{2M}$-quantized RNN $\quantRnnOp{}$---of hidden state size $M-1$---realizing an FIR filter with transfer function $\widetilde{K}(z)$ such that
  \[
  \norm{K(\cdot) - \widetilde{K}(\cdot)}_{\Hardy{\infty}}  \leq \epsilon.
  \]
  Moreover, $\quantRnnOp{}$ can be encoded in a uniquely decodable fashion, provided that both encoder and decoder know $\mC$ and $\eC$, using no more than
    \[
        \frac{1}{\eC} \left(\log \left(\frac{\mC}{\epsilon} \right)\right)^2 + o\left( \left(\log \left(\frac{1}{\epsilon}\right)\right)^2 \right)
    \] bits.
\end{theorem}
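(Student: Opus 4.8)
The plan is to approximate $\sysOp$ by the FIR filter obtained from truncating and quantizing the impulse response, to realize that FIR filter as an RNN via Lemma \ref{lem:rnn_lin_combi_of_history}, and finally to count the bits needed to specify the quantized network. Concretely, I would set
\[
  \widetilde{k}[t] := \begin{cases} \quantF{k[t]}, & 0 \le t \le M-1,\\ 0, & t \ge M,\end{cases}
\]
so that $\widetilde{K}(z)$ is the transfer function of a length-$M$ FIR filter whose taps lie on the grid $\quantSet$.

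First I would establish the approximation bound. Applying Lemma \ref{lem:hard_diff_l1_bound} and splitting the sum at $t=M$ gives
\[
  \norm{K - \widetilde{K}}_{\Hardy{\infty}} \le \sum_{t=0}^{M-1}\bigl|k[t]-\quantF{k[t]}\bigr| + \sum_{t=M}^{\infty}|k[t]|.
\]
By Definition \ref{def:quant} the first (quantization) sum is at most $M\delta = \epsilon/2$ since $|\quantF{w}-w|\le\delta$ and $\delta=\epsilon/(2M)$, while the second (truncation) sum is bounded by the geometric tail $\sum_{t\ge M} a e^{-bt} = a e^{-bM}/(1-e^{-b})$, which the stated choice of $M$ forces to be at most $\epsilon/2$; together these yield the claimed bound $\epsilon$. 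The RNN realization is then immediate from Lemma \ref{lem:rnn_lin_combi_of_history} with $L=M$ (hence hidden state size $M-1$), placing the taps into $\Ao$; I would verify admissibility of the weights, i.e., the structural entries of $A_1,\Ah,\Ar$ lie in $\{-1,0,1\}\subseteq\quantSet\cup\{-1,1\}$ while the tap weights lie in $\quantSet$ by construction, so $\quantRnnOp{}$ has $\delta$-quantized weights. This part is routine.

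The substantive part is the bit count. Since the topology of the convolution RNN of Lemma \ref{lem:rnn_lin_combi_of_history} is completely determined by $M$, no bits are spent on the wiring beyond transmitting $M$ itself; with $a,b$ known to both parties, I would send a prefix-free description of the integer $M$ and of the quantization scale $\delta$ at cost $O(\log M)=O(\log\log(1/\epsilon))=o((\log(1/\epsilon))^2)$, which also renders what follows uniquely decodable. It then remains to encode the $M$ tap values $\widetilde{k}[t]=\delta n_t$, $n_t\in\mathbb{Z}$. Here I would exploit the decay: since $|\widetilde{k}[t]|\le|k[t]|\le a e^{-bt}$, the index $n_t$ ranges over at most $2\lfloor a e^{-bt}/\delta\rfloor+1$ values, so tap $t$ fits in $\lceil\log_2(2\lfloor a e^{-bt}/\delta\rfloor+1)\rceil$ bits; fixing these per-tap field lengths (computable by the decoder from $M,\delta,a,b$) makes the concatenated codeword parseable. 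The goal is then to show
\[
  \sum_{t=0}^{M-1}\lceil\log_2(2\lfloor a e^{-bt}/\delta\rfloor+1)\rceil
  \le \frac{1}{b}\left(\log\frac{a}{\epsilon}\right)^2 + o\!\left(\left(\log\tfrac{1}{\epsilon}\right)^2\right),
\]
using $M\sim\frac{1}{b}\log(a/\epsilon)$ and $\delta=\epsilon/(2M)$.

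The hard part will be this last summation. Because the per-tap cost $\log_2(a e^{-bt}/\delta)=\log_2(a/\delta)-bt\log_2 e$ decreases linearly in $t$ while the number of taps is itself $\Theta(\tfrac{1}{b}\log(a/\epsilon))$, the leading $(\log(1/\epsilon))^2$ term emerges from the interplay of these two scales, and I would obtain it by comparing the sum to the integral $\int_0^{M}(\log_2(a/\delta)-bt\log_2 e)\,dt$; note that exploiting the decay is \emph{essential} here, since a naive uniform allocation of the full range $[-a,a]$ to every tap would overshoot the target. One must also carefully track the floors, the additive constants, the ceilings (contributing $O(M)=o((\log(1/\epsilon))^2)$), and the taps with $a e^{-bt}<\delta$ (which cost no bits), verifying that all such corrections are genuinely $o((\log(1/\epsilon))^2)$ and that the leading coefficient matches the $1/b$ of Theorem \ref{zames-covering}, thereby certifying metric-entropy optimality.
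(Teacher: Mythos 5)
Your proposal follows essentially the same route as the paper's proof: truncate the impulse response at length $M$, quantize the taps on the $\delta$-grid, invoke Lemma \ref{lem:hard_diff_l1_bound} to split the $\Hardy{\infty}$ error into an $M\delta=\epsilon/2$ quantization term and an $\epsilon/2$ geometric tail, realize the resulting FIR filter with the fixed-topology RNN of Lemma \ref{lem:rnn_lin_combi_of_history}, and allocate roughly $\log_2(\mC e^{-\eC t}/\delta)+O(1)$ bits to tap $t$ so that the linearly decreasing per-tap cost summed over $M\sim\frac{1}{\eC}\log(\mC/\epsilon)$ taps yields the quadratic leading term (the paper evaluates this sum in closed form rather than via an integral comparison, and obtains leading coefficient $\tfrac{\log_2 e}{2\eC}$, which it then bounds by $\tfrac{1}{\eC}$ using $\log_2 e\le 2$). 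The argument is correct and matches the paper's in all essential respects.
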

\begin{proof}

  The idea of the proof is to $\delta$-quantize the suitably truncated impulse response $k[t]$ corresponding to $\mathcal{L}$, 
  which is then realized (exactly) by an RNN, denoted as $\quantRnnOp{}$, following the construction 
  in Lemma \ref{lem:rnn_lin_combi_of_history}. 
  Concretely, we choose
    the truncated quantized impulse response according to $\widetilde{k}[t] := \quantF{k[t]}\ind{t\leq (M-1)}$, denote the corresponding transfer function by $\widetilde{K}(z)$,
  and then use Lemma \ref{lem:hard_diff_l1_bound} to bound 
\begin{align}
\norm{K(\cdot) - \widetilde{K}(\cdot)}_{\Hardy{\infty}}  & \leq \sum_{t=0}^{\infty} |k[t]-\widetilde{k}[t]| \\
  &= \sum_{t=0}^{M-1} |k[t]-\widetilde{k}[t]| + \sum_{t=M}^{\infty}|k[t]| \label{eq:quant_iir_impulse_zero} \\
  &\leq M  \delta + \sum_{t=M}^{\infty}\mC{}e^{-\eC{} t} \label{eq:quant_iir_bounds}  \\
  &= M  \delta + \mC{}\, \frac{e^{-\eC{}M}}{1-e^{-\eC}} \label{eq:both-bounds} \\
  &\leq M  \delta + \mC{}\, \frac{e^{-\log\left(\frac{2\mC}{\epsilon(1-e^{-\eC})}\right)}}{1-e^{-\eC}}\\
  &=  M  \delta + \mC{}\, \frac{\frac{\epsilon(1-e^{-\eC})}{2\mC}}{1-e^{-\eC}} \\
  &=  M  \frac{\epsilon}{2M} + \frac{\epsilon}{2} = \epsilon, \label{eq:quant_iir_final}
\end{align}
where in (\ref{eq:both-bounds}) we used $\sum_{n=M}^{\infty} r^n$ 
$=\frac{r^M}{1-r}$, for $|r|<1$.

It remains to establish that the RNN realizing the FIR system with impulse response $\widetilde{k}[t]$ can be encoded in a uniquely decodable fashion into a bitstring of length consorting with 
covering optimality according to (\ref{eq:zames_rate}). 
    As mentioned earlier, encoding an RNN in a bitstring requires specifying its topology and quantized weights, both in binary form. We first convince ourselves that the topology of the RNN realizing $\widetilde{k}[t]$ is fixed and hence does not need to be encoded. This follows by recognizing that in the RNN construction in the proof of  Lemma \ref{lem:rnn_lin_combi_of_history} the quantities $A_1,A_h,A_r,b_1$, and $b_2$ are all independent of the impulse response of the FIR system to be realized and only $A_o$ depends on the impulse response according to $A_o=\tilde{k}^T$. The locations of the nonzero entries in the weight matrices and bias vectors of the approximating RNN hence need not be encoded. This leaves us with having to represent the $M$ quantized impulse response coefficients $\widetilde{k}[t]$ through a bitstring of length scaling in $\eps$ such that covering optimality is attained. To this end,
            we first note that from Definition \ref{def:quant}, we get 
        \[
        |\widetilde{k}[t]| = |\quantF{k[t]} |\leq |k[t]| \leq \mC{}e^{-\eC{} t}, \qquad \forall t \in \{0, \dots, M-1\}.
        \]
        The quantized impulse response coefficients hence satisfy 
        \[
        \widetilde{k}[t] \in \quantSet \, \cap \, [-\mC{}e^{-\eC{} t}, \mC{}e^{-\eC{} t}], \qquad \forall t \in \{0, \dots, M-1\},
        \]
        and can therefore be stored using at most $\left\lceil\log_2\left ( \frac{\mC{}e^{-\eC{} t}}{\delta} \right )\right \rceil + 1$ bits. As $\mC$ and $\eC$ are known to the encoder and the decoder by assumption, we can encode the quantized impulse response coefficients into a uniquely decodable bitstring simply by allocating $\left \lceil \log_2\left ( \frac{\mC{}e^{-\eC{} t}}{\delta} \right )\right\rceil + 1$ bits to each coefficient, concatenating the corresponding binary labels (filled up with zeros if they are of smaller than the alloted length) and have the decoder read out the labels sequentially to deliver the corresponding points in $\quantSet$. 
        
        It remains to establish that the length of the bitstring just constructed conforms with (\ref{eq:zames_rate}).
        To this end, we first upper-bound the length of the bitstring according to 
        \begin{align}
            \sum_{t=0}^{M-1} & \left( \left\lceil\log_2\left ( \frac{\mC{}e^{-\eC{} t}}{\delta} \right )\right \rceil + 1 \right) \\
            &\leq \sum_{t=0}^{M-1} \left( \log_2 \left ( \frac{\mC}{\delta}  \right ) + \log_2(e^{-\eC{}t}) + 2 \right)  \\
            &= 2M + M \logTe{} \log\left ( \frac{\mC}{\delta}  \right ) + \sum_{t=0}^{M-1} \logTe{} \log\left(e^{-\eC{}t}\right ) \\
            &= 2M + M \logTe{}  \log\left ( \frac{\mC}{\delta}  \right ) - \eC{} \logTe{} \sum_{t=0}^{M-1} t \\
            &= 2M+ M \logTe{}  \log\left ( \frac{2\mC{}M}{\epsilon}  \right ) - \eC{}  \logTe{}\, \frac{M(M-1)}{2} \label{teq:plugin_delta} \\
            &= M\left (  \logTe{}  \log\left ( \frac{2\mC{}M}{\epsilon}  \right ) - \eC{}  \logTe{}\, \frac{(M-1)}{2} + 2 \right )  \\
            & =  M\left (  \logTe{} \log\left ( \frac{\mC}{\epsilon}  \right )  - M \frac{\eC{} \logTe{} }{2} + \logTe{} \log(M)  + \frac{\eC{}\logTe{}}{2} + 3 \right ),  \label{teq:lastUB}
        \end{align}
        where we used $\log_2(x)= \logTe{} \log(x)$ with $\logTe{} := \log_2(e)$ and in (\ref{teq:plugin_delta}) we employed $\delta = \frac{\epsilon}{2M}$.
        Next, we note from the definition of $M$ that
        \begin{equation}
             \frac{1}{\eC{}} \log \left(\frac{\mC{}}{\epsilon}\right)  + K_1(\eC{}) \leq M \leq \frac{1}{\eC{}} \log \left(\frac{\mC{}}{\epsilon} \right)  + K_1(\eC{}) + 1, \label{eq:upper-and-lower-on-m}
        \end{equation}
        with $K_1(\eC{}) := \frac{1}{\eC} \log \left(\frac{2}{1-e^{-\eC}} \right)$.
        Using (\ref{eq:upper-and-lower-on-m}) in (\ref{teq:lastUB}) allows us to further
        upper-bound (\ref{teq:lastUB}) as follows:
        \begin{align}
             & M\left (  \logTe{} \log\left ( \frac{\mC{}}{\epsilon}  \right )  - M \frac{\eC{} \logTe{} }{2} + \logTe{} \log(M)  + \frac{\eC{} \logTe{}}{2} + 3 \right ) \\
             &\leq M\left (  \logTe{} \log\left ( \frac{\mC{}}{\epsilon}  \right )  -   \frac{1}{\eC{}} \log \left(\frac{\mC{}}{\epsilon}\right)\frac{\eC{} \logTe{} }{2}  - K_1(\eC{}) \frac{\eC{} \logTe{} }{2} + \logTe{}  \log(M)  + \frac{\eC{} \logTe{}}{2} + 3 \right ) \\
             &= M\left (  \frac{\logTe{} }{2} \log \left ( \frac{\mC{}}{\epsilon}  \right ) + \logTe{}  \log (M)  + K_2(\eC{})\right ) \\
             &\leq \left (\frac{1}{\eC{}} \log \left(\frac{\mC{}}{\epsilon} \right)  + K_1(\eC{}) + 1 \right) \left (  \frac{\logTe{} }{2} \log \left ( \frac{\mC{}}{\epsilon}  \right ) + \logTe{} \log (M)  + K_2(\eC{})\right ) \\
             &= \frac{\logTe{} }{2\eC{}} \left( \log \left(\frac{\mC{}}{\epsilon} \right) \right)^2 + \frac{\logTe{}}{\eC{}} \log \left(\frac{\mC{}}{\epsilon} \right)  \log (M) + \frac{1}{\eC{}} \log \left(\frac{\mC{}}{\epsilon} \right) K_2(\eC{}) \\
             &\quad +(K_1(\eC{})+1)\frac{\logTe{}}{2} \log \left ( \frac{\mC{}}{\epsilon}  \right ) + (K_1(\eC{})+1)\logTe{} \log (M) + (K_1(\eC{}) + 1 ) K_2(\eC{}) \\
             &= \frac{\logTe{} }{2\eC{}} \left( \log \left(\frac{\mC{}}{\epsilon} \right) \right)^2 + \frac{\logTe{}}{\eC{}}  \log (M) \log  \left(\frac{1}{\epsilon} \right) +  K_3(\eC{}) \log \left(\frac{1}{\epsilon} \right)\\
             &\quad +K_4(\mC{}, \eC{}) \log (M) + K_5(\mC{}, \eC{}) \\
             &\leq \frac{1}{\eC{}} \left( \log \left(\frac{\mC{}}{\epsilon} \right) \right)^2 + o\left( \left( \log  \left(\frac{1}{\epsilon}\right) \right)^2 \right),
        \end{align}
        where $K_2(\eC) :=  - K_1(\eC{}) \frac{\eC{} \logTe{} }{2}  + \frac{\eC{}\logTe{}}{2} + 3$, $K_3(\eC{}) := \frac{K_2(\eC{})}{\eC} + \frac{ (K_1(\eC{}) +1) \logTe{}}{2}$, $K_4(\mC{}, \eC{}) := \logTe{} (K_1(\eC{}) + 1) + \log (\mC{}) \frac{\logTe{}}{\eC{}}$, and $K_5(\mC{}, \eC{}) := (K_1(\eC{})+1)K_2(\eC{}) + K_3(\eC{}) \log(\mC{})$. 
        The last inequality follows from $\logTe{} \leq 2$ and $\log (M) = o(\log (\epsilon^{-1}))$.
    \end{proof}
We conclude by noting that the dependence of the hidden state size and the weight quantization resolution of the approximating RNN in Theorem \ref{thm:iir_quant_error} on the parameters $\mC{}, \eC{},\epsilon$ reflects that more complex sets $\mathcal{C}(\mC{}, \eC{})$ and smaller target approximation error require larger hidden state size and higher quantization resolution.

\section{Metric-Entropy-Optimal Learning of Linear Difference Equations} \label{sec:metric-entropy-learning-difference-equations}

Over the last few years a significant body of literature on deep neural network learning of the solutions of parametric PDEs was developed \cite{Grohs1,Grohs2,Raslan2021}. More specifically, this line of work is concerned with learning the map taking the right-hand side of the PDE and its parameters to the solution. We next suggest an alternative viewpoint 
in its simplest possible mathematical incarnation, namely that of learning
differential, in fact difference, equations themselves. 
From a practical perspective this amounts to identifying the dynamics of physical, biological, mechanical, or chemical processes from observed input-output traces \cite{Vlachas2018complex}.

We consider linear difference equations with constant coefficients given by
\begin{equation}
  \sum_{j=0}^{P} b_j y[t - j] = \sum_{i=0}^{Q} a_i x[t-i], \label{diff-eq-rational}
\end{equation}
where $P, Q \in \mathbb{N}, b_j,a_i \in \R$, and $x[t]$ and $y[t]$ designate the input and the output, respectively, of the dynamical system characterized by the difference equation. Difference equations of the form (\ref{diff-eq-rational}) correspond to LTI systems
with rational transfer functions. Concretely application of Lemma \ref{lem:time_shift} yields $Y(z)=K(z)X(z)$ with
\begin{equation}
    	K(z) = \frac{\sum_{i=0}^{Q} a_i z^i}{\sum_{j=0}^{P} b_j z^j}. \label{eq:iir_transfer_func}
\end{equation}
Learning of the difference equation (\ref{diff-eq-rational}) from input-output traces, i.e., determining the coefficients
$a_i$ and $b_j$ in (\ref{diff-eq-rational}) based on the outputs $y[\cdot]$ corresponding to given inputs $x[\cdot]$
hence amounts to identifying the LTI system with transfer function (\ref{eq:iir_transfer_func}). We first convince ourselves that RNNs can, in principle, realize systems with rational transfer functions, thereby extending Lemma \ref{lem:rnn_lin_combi_of_history} where this was shown for polynomial transfer functions.

\begin{theorem}[RNNs can realize all rational transfer functions]
\label{thm:iir_rational_construction}
    Let $\sysOp$ be an LTI system with transfer function 
    \begin{equation}
    	K(z) = \frac{\sum_{i=0}^{Q} a_i z^i}{\sum_{j=0}^{P} b_j z^j}, 
    \end{equation}
    where $Q, P \in \mathbb{N}$ and $a_i,b_j \in \R$ with $b_0 \neq 0$. Then, there exists an RNN that realizes $\sysOp{}$ exactly. 
    
    \begin{proof}
    The proof will be effected by constructing the RNN realizing $\sysOp$. We start by noting that application of the inverse $\mathcal{Z}$-transform and the time shift property Lemma \ref{lem:time_shift} to
\[
    Y(z) = X(z) K(z)
\]
yields the difference equation
\begin{equation}
y[t] = \sum_{i=0}^{Q} c_i x[t-i] + \sum_{j=1}^{P} d_j y[t - j], \label{eq:final_y_formula} 
\end{equation}
with $c_i=\frac{a_i}{b_0}$ and $d_j=-\frac{b_j}{b_0}$.
In contrast to the construction in Lemma \ref{lem:rnn_lin_combi_of_history} which realizes a
forward part only, here given by $\sum_{i=0}^{Q} c_i x[t-i]$, we will need to
account for both the forward part and the backward part $\sum_{j=1}^{P} d_j y[t - j]$.
This will be accomplished by 
choosing the RNN weight matrices $A_1,A_2$ and bias vectors $b_1,b_2$ such that
the hidden state vector $\vec{h}[t]$ contains the last $Q$ values of the input signal $x[\cdot]$ and the last $P$ values of the output signal $y[\cdot]$ according to
\begin{align}
	\h{t} &= 
    \NiceMatrixOptions
{nullify-dots,code-for-first-col = \color{blue},code-for-first-row=\color{blue}, code-for-last-col=\color{blue} }
	\begin{pNiceArray}{c}[first-col]
	\Vdotsfor[line-style={solid,<->},shorten=1pt]{3}_{\scriptscriptstyle Q} &
	x[t] \\
	&\vdots \\[1mm]
	&x[t-(Q-1)] 
	\\\hline
	\Vdotsfor[line-style={solid,<->},shorten=1pt]{3}_{\scriptscriptstyle P} &
	y[t] \\
	&\vdots \\[1mm]
	&y[t-(P-1)] 
	\end{pNiceArray}. 
	\label{eq:iir_h}
\end{align}
Then, based on (\ref{eq:iir_h}), we establish that these choices also yield the output signal as desired. We commence
by specifying the RNN weights and proving (\ref{eq:iir_h}) by induction. 
By slight abuse of
notation, we let the vector $\vec{c}$ have a zero-th entry and define
\begin{align}
    \begin{split}
	\vec{c} &:= \begin{pmatrix}	c_0 & c_1 & \dots & c_Q \end{pmatrix} ^T \in \R^{Q+1}, \\
	\vec{d} &:= \begin{pmatrix} d_1 & \dots & d_P	\end{pmatrix}^T  \in \R^{P}, \label{eq:iir_k}
    \end{split}
\end{align}
and the matrix 
\begin{equation}\label{teq:W_iir}
    W := \;
    \NiceMatrixOptions
{nullify-dots,code-for-first-col = \color{blue},code-for-first-row=\color{blue}, code-for-last-col=\color{blue} }
        \begin{pNiceArray}{cc|cc}[first-row, first-col]
        &\Hdotsfor[line-style={solid,<->}, shorten=1pt]{2}^{Q+1} & 
         \Hdotsfor[line-style={solid,<->}, shorten=1pt]{2}^{P} 
        \\
        \Vdotsfor[line-style={solid,<->},shorten=1pt]{1}_{\scriptscriptstyle 1} 
        & \Block{1-2}{\vec{c}^T}& &\Block{1-2}{\vec{d}^T} 
        \\
        \Vdotsfor[line-style={solid,<->},shorten=1pt]{1}_{\scriptscriptstyle Q} 
		& \Imat{Q}  & \Ovec{Q} & \Block{1-2}{\Omat} &
		\\
		\Vdotsfor[line-style={solid,<->},shorten=1pt]{1}_{\scriptscriptstyle 1} 
	    & \Block{1-2}{\vec{c}^T}& &\Block{1-2}{\vec{d}^T} 
		\\
		\Vdotsfor[line-style={solid,<->},shorten=1pt]{1}_{\scriptscriptstyle P-1} 
		& \Block{1-2}{\Omat} &  &\Imat{P-1} & \Ovec{P-1}
         \end{pNiceArray},
\end{equation}
where the unsubscripted symbols $\Omat$ stand for all-zeros matrices of appropriate dimensions.
The network weights are now chosen according to
\begin{align}\label{teq:weights_iir}
    A_1 & = \begin{pmatrix}
        \Imat{P+Q+1} \\[1mm]
        -\Imat{P+Q+1} \\
    \end{pmatrix},
    \qquad
    A_2  = W\begin{pmatrix}
    \Imat{P+Q+1} & - \Imat{P+Q+1}
    \end{pmatrix},
\end{align}
and $b_1=0_{2P+2Q+2}$, $b_2=0_{P+Q+1}$. With (\ref{eq:weights}) and thanks to (\ref{teq:relu_identiy}), this yields
\begin{equation}
	\begin{pmatrix}	\rnnOut[t] \\\h{t}	\end{pmatrix}  = W \begin{pmatrix}x[t]  \\ \h{t-1}  \end{pmatrix}, \quad \forall t \geq 0. \label{eq:lin_map_rnn_iir}
\end{equation}
We are now ready to establish (\ref{eq:iir_h}) by induction. First, we note that for $h[t]$ in (\ref{eq:iir_h}) to constitute a valid hidden state sequence according to Definition \ref{def:rnn}, the initial state needs to satisfy $h[-1]=0_{Q+P}$. This, indeed, follows from the assumption $x[t]=y[t]=0,\, \forall t < 0$, and, in turn, also yields the base case $t=-1$ of the induction argument.
To establish the induction step, we assume that (\ref{eq:iir_h}) holds for $t-1$ for some $t \geq 0$. Next,
let $\vec{h}_{1:Q}[t]\in\R^Q$ denote the subvector of $\vec{h}[t]$ containing the entries $1$ through $Q$. It now follows from (\ref{eq:lin_map_rnn_iir}) and (\ref{teq:W_iir}) that
\[
    \vec{h}_{1:Q}[t]  =
    \begin{pmatrix} \Imat{Q} & \Ovec{Q} \end{pmatrix} \begin{pmatrix} 
    x[t]\\
    x[t-1]\\
    \vdots \\[1mm]
    x[t-Q] 
    \end{pmatrix}
    = \begin{pmatrix}
    x[t]\\
    \vdots\\[1mm]
    x[t-(Q-1)]
    \end{pmatrix}
\]
and 
\begin{align}\label{eq:output-signal}
\begin{split}
    h_{Q+1}[t]  
    &= \begin{pmatrix}\vec{c}^T & \vec{d}^T \end{pmatrix} 
    \begin{pmatrix}x[t] \\ \vec{h}[t-1] \end{pmatrix}\\
     &= \begin{pmatrix}\vec{c}^T & \vec{d}^T \end{pmatrix}
     \begin{pmatrix}x[t] \\ \vdots \\[1mm] x[t-Q] \\ 
     y[t-1] \\ \vdots \\[1mm] y[t-P] \end{pmatrix}
    \\
    &= \sum_{i=0}^{Q} c_i x[t-i] + \sum_{j=1}^{P} d_j y[t-j]=y[t],
    \end{split}
\end{align}
where we used (\ref{eq:final_y_formula}).
The proof of the induction step is now completed upon noting that
\begin{align*}
    \vec{h}_{(Q+2):(Q+P)}[t] &= W\begin{pmatrix}x[t] \\ \vec{h}[t-1] \end{pmatrix} \\
    &= \begin{pmatrix} \Imat{P-1} & \Ovec{P-1} \end{pmatrix} 
    \begin{pmatrix} y[t-1] \\ \vdots \\[1mm] y[t-P] \end{pmatrix}\\
    &= \begin{pmatrix} y[t-1] \\ \vdots \\[1mm] y[t-(P-1)] \end{pmatrix}.
\end{align*}
Finally, it follows by combining (\ref{eq:lin_map_rnn_iir}), (\ref{teq:W_iir}), and (\ref{eq:output-signal}) that the weights we chose yield the desired output signal.
\end{proof}
\end{theorem}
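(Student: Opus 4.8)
The plan is to reduce the problem to realizing a linear difference equation (an IIR recursion) by an RNN, exploiting that the ReLU nonlinearity can be rendered transparent through the identity $x = \relu(x) - \relu(-x)$ in (\ref{teq:relu_identiy}). First I would translate the transfer function into the time domain: starting from $Y(z) = X(z)K(z)$, clearing the denominator, and applying the inverse $\mathcal{Z}$-transform together with the time-shift property (Lemma \ref{lem:time_shift}), I obtain the recursion $y[t] = \sum_{i=0}^{Q} c_i x[t-i] + \sum_{j=1}^{P} d_j y[t-j]$ with $c_i = a_i/b_0$ and $d_j = -b_j/b_0$. Here the hypothesis $b_0 \neq 0$ is precisely what allows me to solve the difference equation for the current output $y[t]$.

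Next I would design an RNN of hidden-state dimension $Q+P$ whose state stores the relevant signal history, namely the last $Q$ input samples $x[t], \dots, x[t-(Q-1)]$ stacked on top of the last $P$ output samples $y[t], \dots, y[t-(P-1)]$. The central idea—extending Lemma \ref{lem:rnn_lin_combi_of_history}, which handled the purely forward (FIR) case—is that the entire update is an affine map of the augmented vector $(x[t], \h{t-1})$, so it suffices to have the RNN realize a single linear map $W$. To accomplish this I would set $A_1 = \left(\begin{smallmatrix} \Imat{P+Q+1} \\ -\Imat{P+Q+1}\end{smallmatrix}\right)$ and $A_2 = W\begin{pmatrix} \Imat{P+Q+1} & -\Imat{P+Q+1}\end{pmatrix}$ with zero biases, so that the $\relu$ and $-\relu$ contributions recombine, via (\ref{teq:relu_identiy}), into $W$ applied to $(x[t], \h{t-1})$. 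The matrix $W$ must then do three things at once: its first row carries the coefficient vectors $c$ and $d$ in order to emit $y[t]$ according to the difference equation; an identity block shifts the stored input history down by one, discarding the oldest sample; and a further block—reusing the same $c,d$ row together with an identity shift—writes the freshly computed $y[t]$ into the output-history part of the state while shifting the previous outputs.

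I would then verify the claimed hidden-state structure by induction on $t$. The base case is $\h{-1} = \Ovec{Q+P}$, which follows from the causality convention $x[t] = y[t] = 0$ for $t < 0$ and is automatically consistent with Definition \ref{def:rnn}. For the induction step I would assume that the state at $t-1$ holds the correct histories, apply the affine update, and read off block by block that the input-history subvector shifts correctly, that the entry computing $y[t]$ equals $\sum_{i=0}^{Q} c_i x[t-i] + \sum_{j=1}^{P} d_j y[t-j]$—which is exactly $y[t]$ by the derived recursion—and that the output-history subvector shifts correctly with $y[t]$ inserted at the top. Reading the designated output row of $W$ finally confirms $\rnnOut[t] = y[t]$.

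The main obstacle, and the genuine novelty relative to Lemma \ref{lem:rnn_lin_combi_of_history}, is the feedback: the output $y[t]$ must be computed and simultaneously fed back into the state within a single time step, without any circular dependency. This works precisely because the recursion expresses $y[t]$ using only quantities available at time $t$ (the current and past inputs and the strictly past outputs), so $y[t]$ can be formed from $(x[t], \h{t-1})$ alone. The delicate bookkeeping is therefore in arranging $W$ so that the same inner product that produces $y[t]$ is reused to seed the output-history block, and in checking that the index ranges ($Q$ inputs and $P$ outputs, each with the current sample at the top of its block) line up consistently across the shift operations.
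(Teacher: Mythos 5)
Your proposal is correct and follows essentially the same route as the paper's proof: the same reduction to the recursion $y[t]=\sum_{i=0}^{Q}c_i x[t-i]+\sum_{j=1}^{P}d_j y[t-j]$, the same hidden state stacking the last $Q$ inputs over the last $P$ outputs, the same choice $A_1=\bigl(\begin{smallmatrix}\Imat{P+Q+1}\\ -\Imat{P+Q+1}\end{smallmatrix}\bigr)$, $A_2=W\bigl(\Imat{P+Q+1}\;\; -\Imat{P+Q+1}\bigr)$ rendering the ReLU transparent, the same block structure of $W$ with the $(\vec{c}^T,\vec{d}^T)$ row duplicated to both emit $y[t]$ and feed it back into the state, and the same induction with base case from causality.
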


We have hence established that RNNs with real-valued weights can realize LTI systems with rational transfer functions exactly. 
In fact, as inspection of the weight matrices $A_1,A_2$ and the bias vectors $b_1,b_2$ in the proof of Theorem \ref{thm:iir_rational_construction} reveals, the size of the RNN is $\mathcal{O}(P+Q)$ and hence 
proportional to the number of parameters in the system transfer function.

We now proceed to argue that the results established in Section \ref{sec:quant_rnn} provide a fundamental limit on how well difference equations of the form (\ref{diff-eq-rational}) can be learned
in principle and that RNNs can achieve this fundamental limit. But first, we state an important restriction, namely to LTI systems (of rational transfer function) that have corresponding impulse responses in $\ell_1$. In system theory parlance such systems are often referred to as stable \cite[Section 2.6]{kailath1980linear}.
If the coefficients of $K(z)$ in (\ref{diff-eq-rational}) are such that the system is, indeed, stable, the impulse response is necessarily a linear combination of terms of the form $p(t)\cos(\modulationC{} t + \omega) \expBaseC{}^{t}$, where $p(t)$ is a polynomial in $t$, $\expBaseC{} \in (0, 1)$, and $\modulationC{},\omega \in \R$ \cite{Oppenheim2009}. Denoting the largest $\expBaseC{}$ occuring in this linear combination by $\widetilde{\expBaseC{}}$, this class of impulse responses is contained in the set $\mathcal{C}(\mC{}', \log(1/\expBaseC{}^{\prime}))$ with $\expBaseC{}^{\prime}\,>\,\widetilde{\expBaseC{}}$ and $\mC{}'$ chosen suitably, where such a $\expBaseC{}^{\prime} \in (0,1)$ always exists thanks to the set $(0,1)$  
being open and $\mC{}'$ exists as $\mC{}>0$ in (\ref{eq:exp-decay-general-exponent}). Application of \cite[Theorem~13.6]{Rudin1987} or inspection of the embedding 
argument used in \cite{zamesDisc} to establish the lower bound (\ref{eq:lower_bound}) shows that the covering number of the set of stable rational transfer functions equals that of $\mathcal{C}(\mC{}',\log(1/\expBaseC{}'))$. Hence, Theorem \ref{thm:iir_quant_error} allows us to conclude that RNNs can, in principle, learn difference equations of the form (\ref{diff-eq-rational}) with coefficients $a_i,b_j$ such that the corresponding LTI system is stable in a metric-entropy-optimal manner. 

\section{Conclusion}

The setting in this paper was deliberately chosen so as to allow the minimum level of mathematical sophistication needed to bring out the main conceptual findings. Numerous extensions abound, such as the continuous-time case and the approximation of nonlinear systems. It would furthermore be interesting to understand how metric entropy results can be obtained for linear time-varying systems. This would possibly allow to establish RNN covering optimality for general linear dynamical systems. From a control theory perspective our findings state that RNNs can be trained to optimally---in the sense of metric entropy---identify LTI systems. Here, it would be interesting to understand whether the presence of feedback, which is known to reduce identification complexity, could be incorporated into our theory and whether the corresponding fundamental limits can again be shown to be achievable through identification by RNNs. Furthermore, 
we consider it worthwhile to investigate how
concepts such as controllability, reachability, and observability for linear dynamical systems transfer to the state-space representation of RNNs realizing these systems.
An issue we have not touched upon at all is that of algorithms for learning the weights of approximating RNNs and whether such algorithms are likely to find the RNN constructions we exhibit. Another important aspect we did not discuss is that of minimality of linear dynamical system realizations \cite{kailath1980linear} and how it relates to corresponding RNN realizations \cite{DealingWithComplexity1998}. A question cast in the same mould is that of uniqueness of neural network realizations, a large field of research, both in feed-forward as well as recurrent neural network theory \cite{Fefferman94, Vlacic2021, Vlacic-adv-2021, nnUniquenessAlbertini93, Albertini93uniquenessof, StateObsSontag94}. 
Finally, we find that extensions of the ideas in Section \ref{sec:metric-entropy-learning-difference-equations} to linear, nonlinear, partial, and stochastic differential equations constitute a worthwhile endeavor. In this regard, we mention that the universal realization result Lemma \ref{lem:rnn_ltv_construction}, in its continuous-time incarnation, suggests that pseudo-differential operators \cite[Chapter~14]{Grochenig2001} can be represented exactly by (continuous-time) RNNs.

\appendix
\renewcommand{\thesection}{\Alph{section}}

\section{Alternative Definitions of RNNs}\label{app:elmanrnn}
\begin{definition}[Elman RNN] \cite{elman90}, \cite[p274]{Goodfellow2016}
    For $\stateDimElm\in\N$, weights $\uElm \in\R^{\stateDimElm \times 1}, \wElm_1 \in \R^{\stateDimElm \times \stateDimElm}$, $\wElm_2 \in \R^{1\times \stateDimElm}$, and biases $\bElm_1 \in \R^{\stateDimElm}$, $\bElm_2 \in \R$, the Elman RNN with hidden state sequence $\hidElm[t]\,\in\,\R^{\stateDimElm}$ of initial state $\hidElm[-1]=0_{\stateDimElm}$ and output $\rnnOut[t]\,\in\,\R$, for all $t\geq 0$, is defined by 
    \begin{align}
        \hidElm[t] &= \relu(\uElm x[t] + \wElm_1 \hidElm[t-1] + \bElm_1) \label{eq:elman_hidden}\\
        \rnnOut[t] &= \wElm_2 \hidElm[t] + \bElm_2. \label{eq:elman_out}
    \end{align}
\end{definition}

\begin{lemma}\label{lem:elman_def_same}
  The input-output relation of every RNN according to Definition \ref{def:rnn} can equivalently be realized by an Elman RNN.
   \begin{proof}
  Given an RNN according to Definition \ref{def:rnn} with weight matrices $A_1, A_2$ and bias vectors $b_1, b_2$, we construct an Elman RNN that realizes the same input-output map. First, set 
  \begin{align*}
     A_1 = \begin{pmatrix}
        A_x & A_g
      \end{pmatrix},
      \qquad
      A_2=
       \begin{pmatrix}
        A_y \\ A_h
      \end{pmatrix},
      \qquad
      b_1 = b_g,
      \qquad
      b_2=
      \begin{pmatrix}
        b_y \\ b_h
      \end{pmatrix},
  \end{align*}
  with $A_x\,\in\,\R^{n\times 1}$, $A_g\,\in\,\R^{n\times m}$, $A_y\,\in\,\R^{1\times n}$, $A_h\,\in\,\R^{m\times n}$, $b_g\in\R^{n}$, $b_y\,\in\,\R$, and $b_h\,\in\,\R^{m}$. With these definitions, (\ref{eq:weights})
  and (\ref{eq:rnn_seq}) can be written as
        \begin{equation}
      \begin{pmatrix}\label{teq:b5}
        \rnnOut[t] \\ h[t]
      \end{pmatrix}
      = \begin{pmatrix}
        A_y \\ A_h
      \end{pmatrix} g[t] + 
      \begin{pmatrix}
        b_y \\ b_h
      \end{pmatrix},
  \end{equation}
  where
  \[
      g[t] = \relu\left( 
      \begin{pmatrix}
        A_x & A_g
      \end{pmatrix} \begin{pmatrix}
        x[t] \\ h[t-1]
      \end{pmatrix}
      + b_g
      \right).
      \]

  The equivalent---in the sense of input-output relation---Elman RNN is now obtained by setting $\stateDimElm=n$ and
  \begin{equation}
  \begin{gathered}\label{teq:elman_weights}
    \wElm_1 = A_g A_h,  \qquad \uElm = A_x, \qquad \bElm_1 = A_g b_h + b_g, \\
    \wElm_2 = A_y, \qquad \bElm_2=b_y.
  \end{gathered}
  \end{equation}
  We first establish, by induction, that these choices lead to the hidden state sequences of the original RNN and the equivalent Elman RNN to be related according to $h[t] = A_h \hidElm[t] + b_h$, for all $t \geq 0$. The base case follows by choosing the initial hidden state $\hidElm[-1]$ of the Elman RNN such that
    $h[-1]= 0_{m} = A_h \hidElm[-1] + b_h$.
    If $b_h=0$, which is the case for all RNN constructions in this paper, one can, indeed, simply set $\hidElm[-1]=0$. 
  Next, we assume that
  $h[t-1] = A_h \hidElm[t-1] + b_h$, for some $t \geq 0$, and insert (\ref{teq:elman_weights}) into (\ref{eq:elman_hidden}) to obtain
  \begin{align*}
      \hidElm[t] &= \relu( A_x x[t] + A_g A_h \hidElm[t-1] + A_g b_h + b_g )\\
      &= \relu( A_x x[t] + A_g (A_h \hidElm[t-1] + b_h) + b_g )\\
      &= \relu( A_x x[t] + A_g h[t-1] + b_g )\\
      &= g[t]. \label{teq:hidElm_is_g}
  \end{align*}
  Using $\hidElm[t]=g[t]$ in (\ref{teq:b5}) then yields $h[t] = A_h \hidElm[t] + b_h$ as desired. This completes the proof of the induction step.
  The input-output relation of the Elman RNN is seen to equal that of the original RNN---given by (\ref{teq:b5}) as $y[t]=A_y g[t]+b_y$ ---upon inserting $\hidElm[t]=g[t]$, $\wElm_2=A_y$, and $\bElm_2=b_y$ in (\ref{eq:elman_out}).
  \end{proof}
\end{lemma}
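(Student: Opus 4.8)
The plan is to use the post-activation vector $g[t]$ from (\ref{eq:intermediate_activation}) as the Elman hidden state, exploiting the fact that in Definition \ref{def:rnn} both the next state and the output are \emph{affine} images of this single ReLU output. Concretely, I would first split the input weight matrix columnwise as $A_1 = \begin{pmatrix} A_x & A_g \end{pmatrix}$, separating the column acting on the scalar input $x[t]$ from the $m$ columns acting on $h[t-1]$, and split $A_2$ and $b_2$ rowwise into their output part $(A_y, b_y)$ and their hidden-state part $(A_h, b_h)$, in the spirit of (\ref{eq:a2_split}). With these blocks, (\ref{eq:rnn_seq}) reads $y[t] = A_y g[t] + b_y$ and $h[t] = A_h g[t] + b_h$, where $g[t] = \rho(A_x\, x[t] + A_g\, h[t-1] + b_g)$ with $b_g := b_1$.

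The crux is that $h[t]$ is an affine function of $g[t]$. Substituting $h[t-1] = A_h g[t-1] + b_h$ into the recursion for $g[t]$ collapses the two-stage update into a single self-recursion in $g$ alone:
\[
    g[t] = \rho\bigl(A_x\, x[t] + A_g A_h\, g[t-1] + (A_g b_h + b_g)\bigr).
\]
This is exactly the Elman update (\ref{eq:elman_hidden}) once I set $\mathring{m} = n$ and read off $\mathring{U} = A_x$, $\mathring{W}_1 = A_g A_h$, and $\mathring{b}_1 = A_g b_h + b_g$; the output layer (\ref{eq:elman_out}) is matched by $\mathring{W}_2 = A_y$ and $\mathring{b}_2 = b_y$. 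I would then formalize the identification $\mathring{h}[t] = g[t]$ by induction on $t$, equivalently proving the invariant $h[t] = A_h \mathring{h}[t] + b_h$ so that the substitution above is legitimate at every step; the induction step is a direct rewriting using the chosen weights.

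The only delicate point, and the main obstacle, is the base case, i.e., matching the initial conditions. The Elman definition fixes $\mathring{h}[-1] = 0_{\mathring{m}}$, whereas the general RNN fixes $h[-1] = 0_m$, and the invariant at $t=-1$ demands $0_m = A_h \mathring{h}[-1] + b_h$. When $b_h = 0_m$, which holds for every concrete construction in this paper since $b_2 = 0$ throughout, one simply takes $\mathring{h}[-1] = 0$ and the base case is immediate; otherwise one must permit a suitably chosen nonzero Elman initial state, a mild relaxation of the definition that I would flag explicitly. Once the invariant holds for all $t \geq 0$, inserting $\mathring{h}[t] = g[t]$ into $y[t] = A_y g[t] + b_y$ and into (\ref{eq:elman_out}) shows the two input--output maps coincide, completing the argument.
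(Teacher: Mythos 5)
Your proposal is correct and follows essentially the same route as the paper's own proof: the identical block decomposition of $A_1$, $A_2$, $b_1$, $b_2$, the same Elman weights $\mathring{W}_1 = A_g A_h$, $\mathring{U} = A_x$, $\mathring{b}_1 = A_g b_h + b_g$, $\mathring{W}_2 = A_y$, $\mathring{b}_2 = b_y$, and the same inductive invariant $h[t] = A_h \mathring{h}[t] + b_h$ with $\mathring{h}[t] = g[t]$. Your explicit flagging of the initial-condition subtlety when $b_h \neq 0$ matches (and if anything slightly sharpens) the paper's own remark on that point.
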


\section{Properties of the \texorpdfstring{$\mathcal{Z}$}{Z}-transform and of Hardy Norms}\label{app:zproperties}
\begin{lemma}\label{lem:time_shift}
Let $x[t]$ be a one-sided sequence, i.e., $x[t]=0$, for $t<0$. Then, for $k\in\N$, it holds that
\[
    (\Z{x[\cdot-k]})(z) = z^k (\Z{x[\cdot]})(z).
\]
\begin{proof}
We have 
  \begin{align*}
    (\Z{x[\cdot-k]})(z) &= \sum_{t=0}^{\infty} x[t-k] z^t 
    = \sum_{t=-k}^{\infty} x[t]z^{t+k} \\
    & = z^k \sum_{t=0}^{\infty} x[t]z^{t} 
    = z^k (\Z{x[\cdot]})(z),
  \end{align*}
  where we used that $x[t]$ is one-sided.
 \end{proof}
\end{lemma}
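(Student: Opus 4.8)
The plan is to compute the $\mathcal{Z}$-transform of the shifted sequence directly from the definition (\ref{eq:z_def}) and then reindex the summation to expose a factor of $z^k$. First I would write $(\Z{x[\cdot-k]})(z) = \sum_{t=0}^{\infty} x[t-k]\,z^t$ and perform the substitution $s = t-k$ (equivalently $t = s+k$), under which the summation index $t=0$ corresponds to $s=-k$. This turns the sum into $\sum_{s=-k}^{\infty} x[s]\,z^{s+k}$, and factoring the constant $z^k$ out of every term gives $z^k \sum_{s=-k}^{\infty} x[s]\,z^{s}$.

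The key step that actually uses the hypothesis is the one-sidedness of $x[\cdot]$: since $x[s]=0$ for $s<0$, every term with $s\in\{-k,\dots,-1\}$ vanishes, so the lower summation limit can be raised from $s=-k$ to $s=0$ without changing the value. This yields $z^k \sum_{s=0}^{\infty} x[s]\,z^{s} = z^k\,(\Z{x[\cdot]})(z)$, which is exactly the claimed identity. Renaming the dummy index back to $t$ at the end keeps the notation consistent with (\ref{eq:z_def}).

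There is essentially no genuine obstacle here; the statement is a routine index manipulation, the discrete analogue of the classical time-shift property of the $\mathcal{Z}$-transform, adapted to the positive-exponent convention adopted in (\ref{eq:z_def}). The only point requiring mild care is that the reindexing and the termwise extraction of $z^k$ be legitimate, which is guaranteed because the series defining $(\Z{x[\cdot]})(z)$ converges for $|z|<1$ on the relevant domain, so the finitely many vanishing terms may be discarded and the factor $z^k$ pulled out without any convergence issues. Consequently the entire argument is a short, self-contained chain of equalities rather than a multi-step construction.
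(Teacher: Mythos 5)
Your proposal is correct and follows exactly the same route as the paper's proof: apply the definition, reindex by $s=t-k$, use one-sidedness to raise the lower summation limit from $-k$ to $0$, and factor out $z^k$. The paper compresses these steps into a single chain of equalities, but the argument is identical.
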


\begin{theorem} \label{thm:z_isometry}
Let $x\in \ell^2$ be a one-sided sequence, i.e., $x[t]=0$, for $t<0$. Then,
we have
\[
    \norm{X}_{\Hardy{2}} = \norm{x}_{\ell^2}.
\]
\begin{proof}
    \begin{align*} 
        \norm{X}_{\Hardy{2}}^2 &= \sup_{r<1}\, \frac{1}{2\pi} \int_0^{2\pi} \left| X (re^{i\theta}) \right |^2 d\theta \\
         &= \sup_{r<1}\,\frac{1}{2\pi} \int_0^{2\pi} \left| \sum_{t=0}^{\infty} x[t] (re^{i\theta})^t \right |^2 d\theta \\
         &= \sup_{r<1} \,  \sum_{t=0}^{\infty} \sum_{t'=0}^{\infty} x[t]  \overline{x[t']} \, r^{t+t'} \frac{1}{2\pi}\int_0^{2\pi} e^{i\theta (t - t')} d\theta \\
         &= \sup_{r<1} \,  \sum_{t=0}^{\infty} \sum_{t'=0}^{\infty} x[t]  \overline{x[t']} \, r^{t+t'} \ind{t=t'} \\
         &= \sup_{r<1} \,  \sum_{t=0}^{\infty} |x[t]|^2  \, r^{2t} \\
         &= \sum_{t=0}^{\infty} |x[t]|^2 = \norm{x}_{\ell^2}^2. &&\qedhere
    \end{align*}
\end{proof}
\end{theorem}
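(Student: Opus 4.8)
The plan is to work throughout with the \emph{squared} norms and to reduce the $\Hardy{2}$ norm of $X$ to a plain sum of squared magnitudes of the sequence $x$. Starting from the definition in Definition~\ref{def:hardy}, I would substitute the power series $X(re^{i\theta})=\sum_{t=0}^{\infty} x[t]\,r^t e^{i\theta t}$ coming from (\ref{eq:z_def}) and expand the modulus squared as the double sum $|X(re^{i\theta})|^2 = \sum_{t}\sum_{t'} x[t]\,\overline{x[t']}\, r^{t+t'} e^{i\theta(t-t')}$. The objective is then to integrate this double series term by term against $d\theta/2\pi$ and afterwards evaluate the supremum over $r<1$.

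The key computational step is the orthogonality relation $\frac{1}{2\pi}\int_0^{2\pi} e^{i\theta(t-t')}\,d\theta = \ind{t=t'}$, which collapses the double sum onto its diagonal and leaves $\frac{1}{2\pi}\int_0^{2\pi}|X(re^{i\theta})|^2\,d\theta = \sum_{t=0}^{\infty} |x[t]|^2 r^{2t}$. Before invoking this, however, I must justify interchanging the $\theta$-integration with the (double) summation. For fixed $r<1$ this is legitimate by Fubini--Tonelli: the Cauchy--Schwarz inequality gives $\sum_{t=0}^{\infty} |x[t]|\,r^t \le \norm{x}_{\ell^2}\big(\sum_{t=0}^{\infty} r^{2t}\big)^{1/2} = \norm{x}_{\ell^2}/\sqrt{1-r^2} < \infty$, so the double series is absolutely summable uniformly on $[0,2\pi]$ and the interchange of sum and integral is valid. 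This is precisely where the hypotheses $x\in\ell^2$ and $r<1$ are used.

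It then remains to pass the supremum over $r<1$ through the resulting series $\sum_{t=0}^{\infty}|x[t]|^2 r^{2t}$. Since every summand is nonnegative and nondecreasing in $r$, the partial sums are monotone in $r$, so $\sup_{r<1}$ coincides with the limit $r\to 1^{-}$; by monotone convergence this limit equals $\sum_{t=0}^{\infty}|x[t]|^2 = \norm{x}_{\ell^2}^2$. Combining the two displayed identities yields $\norm{X}_{\Hardy{2}}^2 = \norm{x}_{\ell^2}^2$, and taking square roots gives the claim.

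I expect the main obstacle to be purely the analytic bookkeeping of the two limit interchanges---the Fubini step for the $\theta$-integral and the monotone passage of $\sup_{r<1}$ into the infinite sum---rather than any conceptual difficulty; the algebraic core is simply the orthonormality of the exponentials $e^{i\theta t}$ on the circle.
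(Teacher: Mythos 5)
Your proposal is correct and follows essentially the same route as the paper's proof: expand $|X(re^{i\theta})|^2$ as a double series, use the orthogonality $\frac{1}{2\pi}\int_0^{2\pi} e^{i\theta(t-t')}\,d\theta = \ind{t=t'}$ to collapse to the diagonal, and evaluate $\sup_{r<1}$ by monotonicity. Your explicit Fubini--Tonelli justification via Cauchy--Schwarz and the monotone-convergence argument merely make rigorous the interchanges the paper performs implicitly, so there is nothing to add.
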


\begin{theorem}\label{thm:opNorm_eq_infty}
For $K(\cdot)$ such that $\hNorm{\transF} < \infty$, it holds that
\begin{equation}
     \hNorm{\transF} = \sup_{X\in \Hardy{2}} \frac{\hhNorm{\transF X}}{\hhNorm{X}}. \label{Rochberg}
\end{equation}
\begin{proof}
The proof essentially follows \cite{McCarthy03} with minor refinements and details filled in.
We start by noting that the RHS of (\ref{Rochberg}) is the operator norm $\opNorm{K}:=\sup_{X\in \Hardy{2}} \frac{\hhNorm{KX}}{\hhNorm{X}}$ of the multiplication operator $X(z) \rightarrow K(z) X(z)$ and first establish that $\opNorm{K} \leq \hNorm{\transF}$. For every $X\in \Hardy{2}$,
we have
        \begin{align*}
        \norm{K\, X}_\Hardy{2} &= \HardyTwo{|K(re^{i\theta})  X(re^{i\theta})|^2} \\
        & \leq \HardyTwo{|X(re^{i\theta})|^2  \left(\sup_{|z|<1} |K(z)| \right)^2} \\
        & = \norm{K}_\Hardy{\infty}  \HardyTwo{|X(re^{i\theta})|^2} \\
        &= \norm{K}_\Hardy{\infty} \norm{X}_\Hardy{2},
      \end{align*}
which, upon division by $\norm{X}_\Hardy{2}$ establishes the desired upper bound.

To complete the proof, we show that $\opNorm{K} \geq \hNorm{\transF}$.
Applying 
\[\hhNorm{\transF X} \leq \opNorm{\transF} \hhNorm{X}
\]
repeatedly, we get, for every $n\in \N$,
\begin{align}\label{teq:22}
    \hhNorm{K^n X} \leq \opNorm{K}^n \hhNorm{X}.
\end{align}
Without loss of generality, we can restrict ourselves to $\opNorm{\transF}=1$ as otherwise we can simply consider $\transF' := \transF / \opNorm{\transF}$. 
Next, towards a contradiction, assume that $\opNorm{\transF} < \hNorm{\transF}$, which, thanks to $\opNorm{\transF}=1$, results in
$1< \hNorm{\transF} = \sup_{r<1, \, 0\, \leq\, \theta < 2\pi} |\transF(re^{i\theta})|$. As $\hNorm{\transF}<\infty$ by assumption, it follows that $\transF(z)$ is analytic and thus continuous inside the unit disk. Hence, there exist $0<r'<1, \epsilon>0$ and an interval $[\underline{\theta}, \overline{\theta})\,\in\,[0,2\pi)$
with $ \overline{\theta} - \underline{\theta}= \delta >0$ such that
\begin{equation}\label{teq:a24}
|\transF(r'e^{i\theta'})| > 1 + \epsilon, \quad \forall \theta' \in [\underline{\theta}, \overline{\theta}).
\end{equation}
Now we take $X(z)=1$ 
which clearly satisfies $\hhNorm{X}=1$. Inserting this into (\ref{teq:22}), we obtain
\begin{equation*}
     \hhNorm{\transF^n X}^2 \leq \opNorm{\transF}^{2n}\hhNorm{X}^2 = 1.
\end{equation*}
This, however, finalizes the proof by leading to the following contradiction
\begin{align}
    1 &\geq \hhNorm{\transF^n X}^2 \nonumber \\
    &= \sup_{0<r<1} \frac{1}{2\pi} \int_0^{2\pi} |\transF(re^{i\theta})|^{2n} d\theta \nonumber\\
    & \geq \frac{1}{2\pi} \int_0^{2\pi} |\transF(r'e^{i\theta})|^{2n} d\theta \nonumber \\
    & \geq \frac{1}{2\pi} \int_0^{2\pi} ((1+\epsilon)\ind{\theta \in [\underline{\theta}, \overline{\theta})})^{2n} d\theta \label{teq:a27}\\
    &= \frac{\delta}{2\pi} (1+\epsilon)^{2n} \xrightarrow[n\rightarrow \infty]{} \infty, \nonumber
\end{align}
where in
(\ref{teq:a27}) we used (\ref{teq:a24}) and the fact that $|\transF(r'e^{i\theta})| \geq 0$, for $\theta \notin [\underline{\theta}, \overline{\theta})$.
\end{proof}
\end{theorem}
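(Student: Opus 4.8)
The plan is to recognize the right-hand side of~(\ref{Rochberg}) as the operator norm $\opNorm{K}$ of the analytic multiplication operator $M_K : X \mapsto K X$ on $\Hardy{2}$, and to prove the two inequalities $\opNorm{K} \leq \hNorm{K}$ and $\opNorm{K} \geq \hNorm{K}$ separately.

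For the upper bound I would use the pointwise domination $|K(z)| \leq \hNorm{K}$ throughout the open unit disk, which holds by Definition \ref{def:hardy}. Writing $\hhNorm{KX}^2$ as the supremum over $r<1$ of $\frac{1}{2\pi}\int_0^{2\pi} |K(re^{i\theta})|^2 |X(re^{i\theta})|^2\, d\theta$ and bounding $|K(re^{i\theta})|^2 \leq \hNorm{K}^2$ inside the integral yields $\hhNorm{KX} \leq \hNorm{K}\, \hhNorm{X}$ for every $X \in \Hardy{2}$, and hence $\opNorm{K} \leq \hNorm{K}$. This direction is routine.

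The lower bound is the crux. My preferred route is via the reproducing kernel of $\Hardy{2}$. Since the $\mathcal{Z}$-transform is an isometry onto $\Hardy{2}$ (Theorem \ref{thm:z_isometry}), the $\Hardy{2}$ inner product coincides with the $\ell^2$ inner product of the underlying sequences, so the reproducing kernel $k_w(z) := \sum_{t=0}^{\infty} \overline{w}^{\,t} z^t = (1-\overline{w} z)^{-1}$, for $|w|<1$, lies in $\Hardy{2}$ and satisfies $\langle F, k_w\rangle = F(w)$ for all $F \in \Hardy{2}$. I would then verify that $k_w$ is an eigenfunction of the adjoint $M_K^{*}$: for every $F \in \Hardy{2}$,
\[
\langle M_K F,\, k_w\rangle = (K F)(w) = K(w)\, F(w) = \langle F,\, \overline{K(w)}\, k_w\rangle,
\]
so that $M_K^{*} k_w = \overline{K(w)}\, k_w$ and therefore $|K(w)| = \hhNorm{M_K^{*} k_w}/\hhNorm{k_w} \leq \opNorm{K}$, where I use that the adjoint has the same operator norm as $M_K$. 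Taking the supremum over $|w|<1$ and invoking Definition \ref{def:hardy} gives $\hNorm{K} \leq \opNorm{K}$, which closes the argument.

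The main obstacle is precisely this lower bound, and it hinges on two analytic points: that $(K F)(w) = K(w) F(w)$ holds pointwise (immediate from analyticity of both factors together with $K F = M_K F \in \Hardy{2}$, which the upper bound guarantees) and that passing to the adjoint is legitimate, i.e., that $M_K$ is bounded---again furnished by the upper bound. An elementary alternative that avoids reproducing kernels is to iterate $\hhNorm{K X} \leq \opNorm{K}\, \hhNorm{X}$ to get $\hhNorm{K^n X} \leq \opNorm{K}^{n}\, \hhNorm{X}$; after normalizing $\opNorm{K}=1$ and taking $X \equiv 1$ (so $\hhNorm{X}=1$) one obtains $\hhNorm{K^n} \leq 1$ for all $n$. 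If one had $\hNorm{K} > 1$, continuity of $K$ in the disk would force $|K(r'e^{i\theta})| > 1+\epsilon$ on an arc of positive length $\delta$ for some $r'<1$, making $\frac{1}{2\pi}\int_0^{2\pi} |K(r'e^{i\theta})|^{2n}\, d\theta \geq \frac{\delta}{2\pi}(1+\epsilon)^{2n} \to \infty$, contradicting $\hhNorm{K^n}^2 \leq 1$.
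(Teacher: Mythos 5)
Your proposal is correct, and your primary route to the lower bound is genuinely different from the paper's. The upper bound $\opNorm{\transF}\leq\hNorm{\transF}$ is identical in both. For the lower bound, you use the reproducing-kernel structure of $\Hardy{2}$: since the $\mathcal{Z}$-transform is an isometry onto $\ell^2$ (Theorem \ref{thm:z_isometry}), the kernel $k_w(z)=(1-\overline{w}z)^{-1}$ reproduces point evaluation, is an eigenvector of $M_{\transF}^{*}$ with eigenvalue $\overline{\transF(w)}$, and hence $|\transF(w)|\leq\opNorm{\transF}$ pointwise on the disk; taking the supremum over $w$ finishes the argument. This is clean, avoids any limiting process, and is the argument that generalizes to multiplier algebras of other reproducing-kernel Hilbert spaces; the two analytic prerequisites you flag (boundedness of $M_{\transF}$ so that the adjoint exists, and the pointwise identity $(\transF F)(w)=\transF(w)F(w)$) are indeed supplied by the upper bound together with analyticity. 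The paper instead uses the power trick: iterate $\hhNorm{\transF X}\leq\opNorm{\transF}\hhNorm{X}$ to get $\hhNorm{\transF^{n}X}\leq\opNorm{\transF}^{n}\hhNorm{X}$, normalize $\opNorm{\transF}=1$, take $X\equiv 1$, and derive a contradiction from the growth of $\frac{1}{2\pi}\int_0^{2\pi}|\transF(r'e^{i\theta})|^{2n}\,d\theta$ on an arc where $|\transF|>1+\epsilon$ --- which is precisely the ``elementary alternative'' you sketch at the end of your proposal. That route needs no adjoints or inner-product structure, only the norm and continuity of $\transF$ inside the disk, at the cost of an $n\to\infty$ limiting argument. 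Either version is a complete proof.
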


\bibliographystyle{custom-els}
\bibliography{main}

\end{document}